\documentclass{book}
\usepackage{amsmath, amssymb}
\usepackage{amsthm}
\usepackage{geometry}
\usepackage{graphicx}
\usepackage{caption}
\usepackage{booktabs}
\usepackage{tikz}
\usepackage{hyperref} 
\usetikzlibrary{arrows.meta,positioning,fit,calc}
\usepackage[numbers]{natbib}
\usepackage{listings}
\usepackage{xcolor}
\usepackage{float}

\lstdefinestyle{polarcode}{
  language=Python,
  basicstyle=\ttfamily\small,
  keywordstyle=\color{blue!70!black},
  commentstyle=\color{gray!70},
  stringstyle=\color{green!50!black},
  numbers=left,
  numberstyle=\tiny\color{gray},
  stepnumber=1,
  numbersep=8pt,
  frame=single,
  breaklines=true,
  showstringspaces=false,
  tabsize=2
}

\usepackage{pgfplots}
\pgfplotsset{compat=1.18}

\newtheorem{theorem}{Theorem}[chapter]

\newtheorem{definition}{Definition}[chapter]
\newtheorem{proposition}{Proposition}[chapter]

\theoremstyle{remark}
\newtheorem{remark}{Remark}[chapter]

\newcommand{\dedication}[1]{%
  \cleardoublepage
  \thispagestyle{empty}
  \vspace*{\fill}
  \begin{center}
  \itshape
  #1
  \end{center}
  \vspace*{\fill}
}

\title{Foundations of Polar Linear Algebra \\ \large A Structured Framework for Radial--Angular Operators}
\author{Giovanni Guasti}
\date{01 01 2026}

\begin{document}
\frontmatter

\maketitle

\chapter*{Abstract}
\addcontentsline{toc}{chapter}{Abstract}

This work revisits operator learning from a spectral perspective by introducing
Polar Linear Algebra, a structured framework based on polar geometry that
combines a linear radial component with a periodic angular component. Starting
from this formulation, we define the associated operators and analyze their
spectral properties.

As a proof of feasibility, the framework is evaluated on a canonical benchmark
(MNIST). Despite the simplicity of the task, the results demonstrate that polar
and fully spectral operators can be trained reliably, and that imposing
self-adjoint-inspired spectral constraints improves stability and convergence.

Beyond accuracy, the proposed formulation leads to a reduction in parameter
count and computational complexity, while providing a more interpretable
representation in terms of decoupled spectral modes. By moving from a spatial to
a spectral domain, the problem decomposes into orthogonal eigenmodes that can be
treated as independent computational pipelines.

This structure naturally exposes an additional dimension of model
parallelization, complementing existing parallel strategies without relying on
ad-hoc partitioning. Overall, the work offers a different conceptual lens for
operator learning, particularly suited to problems where spectral structure and
parallel execution are central.

\dedication{
To my two families.
\par
To my wife and our children, with love and with the hope that, guided by
empathy and curiosity, they may help build a better world.
\par
And to Xilinx and AMD, where I have spent more than twenty
years of my professional life, with gratitude, hope, and the conviction that
the technologies shaping the future may serve peace and help illuminate some of
the deepest questions about intelligence and self-awareness.}

\tableofcontents

\mainmatter
\chapter{Introduction to Polar Linear Algebra}

We introduce a linear--algebraic framework for data defined over polar grids.
This chapter formalizes polar matrices, convolutional (circular) products, the polar transpose,
self-adjoint operators, norms, and the invertibility criterion known as Aurora's Theorem.
Numerical examples and visual illustrations accompany each definition.

\section{Introduction}

Many natural and artificial signals possess inherent rotational symmetry:
fruit cross-sections, microphone arrays, radar sweeps, and circular patterns.
These structures are often more naturally represented on a polar grid than on a Cartesian one.

Figure~\ref{fig:pomegranate} shows a pomegranate cross-section, where
radial levels and angular sectors emerge organically from the physical morphology.
This motivates the introduction of \emph{polar matrices},
operators indexed by radius $r$ and angle $\theta$:
\[
A[r,\theta],\quad r=0,\dots,N_r-1,\quad \theta = 0,\dots,N_\theta-1.
\]

\begin{figure}[h]
    \centering
    \includegraphics[width=0.45\linewidth]{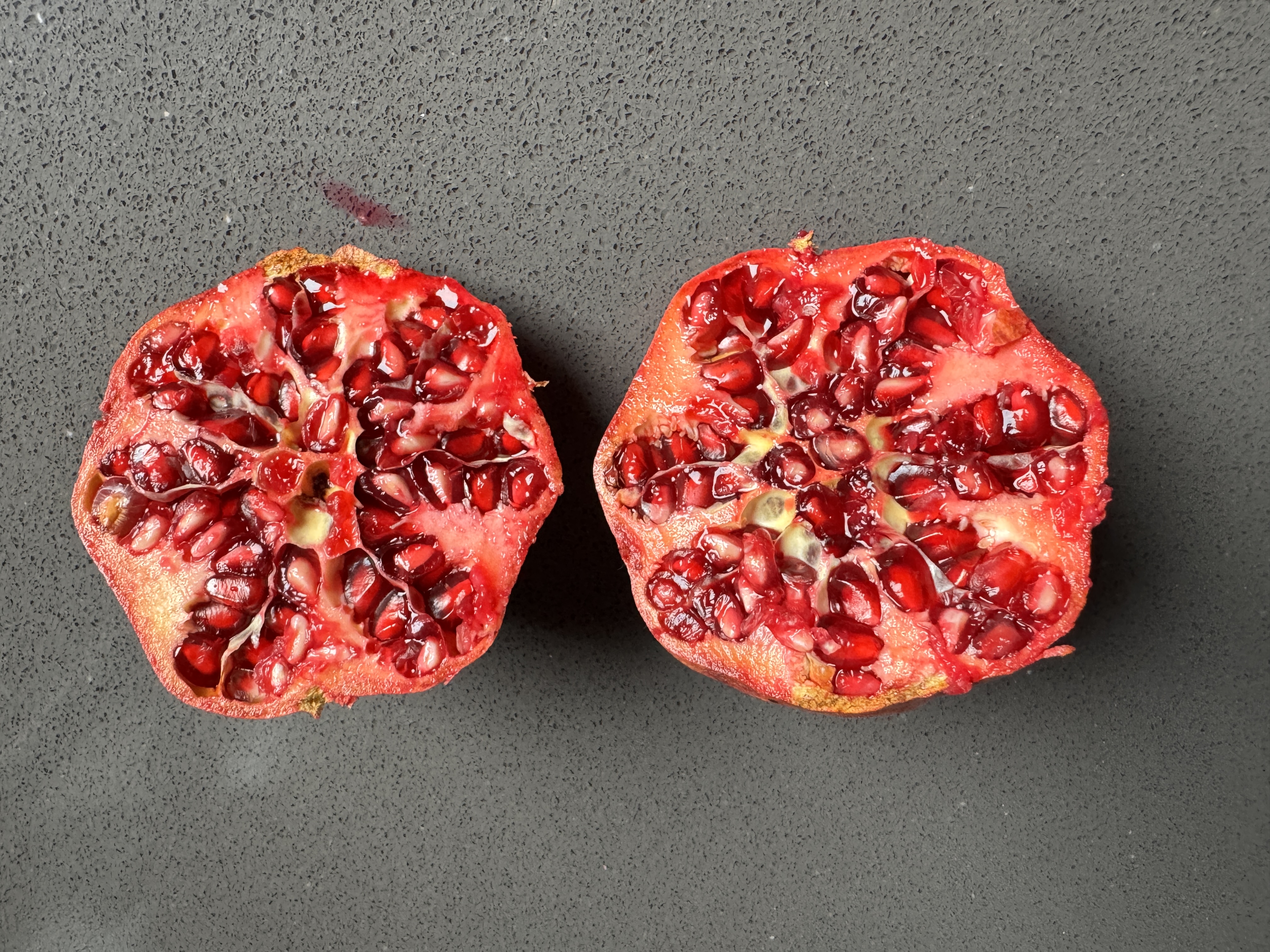}
    \caption{A natural example of polar symmetry: a pomegranate cross-section.
    Radial layers and angular segmentation align with a polar grid.}
    \label{fig:pomegranate}
\end{figure}

Polar matrices form a structured algebra where the angular dimension
is governed by circular convolution and the radial dimension acts as
a collection of independent channels.
This chapter develops the fundamental algebraic properties of polar operators,
their symmetries, and their relevance for constructing rotation–equivariant
deep learning architectures.

\paragraph{Notational and Conceptual Conventions.}
Throughout this work, we distinguish between \emph{polar matrices}
and \emph{classical (linear) matrices}.

A \emph{polar matrix} is a tensor indexed by radial and angular
coordinates and is combined with other polar matrices through the
\emph{polar product}~$\otimes$, defined as circular convolution along
the angular dimension.

Classical matrices act on vectors via standard matrix multiplication.
Whenever classical linear operators are introduced (e.g.\ circulant
matrices or Fourier matrices), they will be explicitly referred to as
such. No ambiguity between polar operators and classical linear
operators is intended.

We distinguish between:
(i) operators acting on polar matrices (such as rotors),
(ii) the polar product $\otimes$ between polar matrices,
and (iii) classical matrix multiplication, which is only used
when explicitly stated.

\section{The Polar Grid and Polar Matrices}

A polar grid consists of discrete radii $\{r_i\}$ and angular samples $\{\theta_j\}$.\footnote{
The same design pattern extends naturally to tensors with more than one
angular dimension, such as
\[
A[r,\theta_1,\theta_2] \in
\mathbb{R}^{N_r \times N_{\theta_1} \times N_{\theta_2}}.
\]
}

A \textbf{polar matrix} is a tensor:
\[
A \in \mathbb{R}^{N_r \times N_\theta}, \qquad A[r,\theta] = \text{value at sector }(r,\theta).
\]

\subsection*{Example: a simple polar matrix}

Let $N_r = 2$, $N_\theta = 4$.
A simple polar matrix is represented in figure \ref{fig:polarmatrix}

Row $r=0$ corresponds to the inner ring, row $r=1$ to the outer ring; columns represent equally spaced angles.

\begin{figure}[h] 
    \centering
    \includegraphics[width=0.45\linewidth]{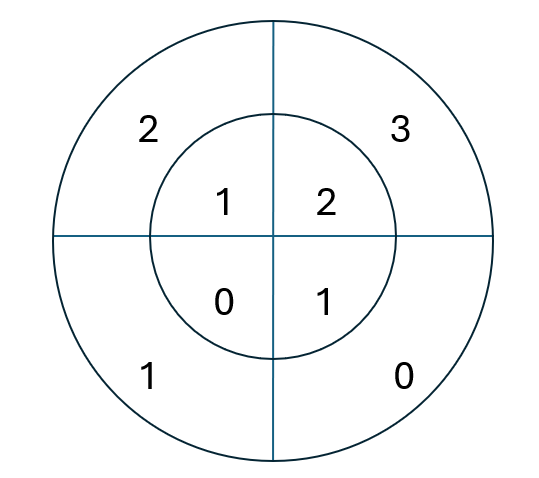}
    \caption{A polar matrix represented in its natural radial--angular form.
    The inner ring contains the values $(2,1,0,1)$ and the outer ring the values $(3,2,1,0)$.}
    \label{fig:polar_circle_representation}
\end{figure}\label{fig:polarmatrix}

\subsection*{Cartesian representation of a polar matrix (counterclockwise $\theta$)}

In polar coordinates we adopt the standard mathematical convention:
$\theta$ increases counterclockwise, with $\theta=0$ on the positive $x$--axis.

Under this indexing, the polar matrix shown in Fig.~\ref{fig:polar_circle_representation}
corresponds to the Cartesian form
\[
A =
\begin{pmatrix}
2 & 1 & 0 & 1 \\
3 & 2 & 1 & 0
\end{pmatrix},
\]
where rows represent radial levels and columns represent angular samples
$(0,\tfrac{\pi}{2},\pi,\tfrac{3\pi}{2})$.
The first row corresponds to the inner radial level, while the second row
represents the outer level, with columns ordered cyclically along the angular dimension.
This Cartesian form is often convenient for algebraic manipulation,
while preserving the underlying polar semantics.

\section{Algebra on Polar Matrices}

In this section we define the basic algebraic operations on polar matrices:
addition, scalar multiplication, Hadamard product, polar product, polar transpose,
self-adjointness, and inner product/norm.

\subsection{Addition}

Given two polar matrices $A,B \in \mathbb{R}^{N_r \times N_\theta}$, their sum is defined pointwise:
\[
(A+B)[r,\theta] = A[r,\theta] + B[r,\theta].
\]

\subsection*{Example}

Given
\[
A=\begin{pmatrix}1&2&3&4\\2&0&1&3\end{pmatrix},\quad
B=\begin{pmatrix}4&3&2&1\\1&1&1&1\end{pmatrix},
\]
we have
\[
A+B =
\begin{pmatrix}
5&5&5&5\\
3&1&2&4
\end{pmatrix}.
\]

\subsection{Scalar multiplication}

For any scalar $\lambda \in \mathbb{R}$, the scalar--polar matrix product is defined by
\[
(\lambda A)[r,\theta] = \lambda \, A[r,\theta].
\]

\subsection*{Example}

Let $\lambda = 3$ and
\[
A=\begin{pmatrix}1&2&3&4\\2&0&1&3\end{pmatrix}.
\]
Then
\[
3A =
\begin{pmatrix}
3&6&9&12\\
6&0&3&9
\end{pmatrix}.
\]

\subsection{Hadamard (elementwise) product}

Given two polar matrices A,B the elementwise product (Hadamard product) is
\[
(A \odot B)[r,\theta] = A[r,\theta]\, B[r,\theta].
\]

\section{The Polar Product (Circular Convolution)}

The key operation in polar linear algebra is the \textbf{polar product}, defined as angular circular convolution at each radius.

Given two polar matrices A,B, for each radius $r$, define:
\[
(A \otimes B)[r,\theta]
= \sum_{k=0}^{N_\theta-1}
A[r,k] \, B[r,(\theta-k)\bmod N_\theta].
\]
This is just circular convolution of the angular vectors $A[r,:]$ and $B[r,:]$.

\subsection*{Example 1: circular convolution on a single ring}

Let at a fixed radius $r$:
\[
a = [2,1,0,1], \quad b=[1,0,1,0].
\]
Then
\[
(a\otimes b)[0] = 2\cdot1 + 1\cdot0 + 0\cdot1 + 1\cdot0 = 2,
\]
\[
(a\otimes b)[1] = 2\cdot0 + 1\cdot1 + 0\cdot0 + 1\cdot1 = 2,
\]
and similarly one checks that
\[
a \otimes b = [2,2,2,2].
\]

\subsection*{Example 2: circular product of two $2\times 3$ polar matrices}

Let
\[
A=\begin{pmatrix}
2 & 1 & 0\\
3 & 2 & 1
\end{pmatrix},
\qquad
B=\begin{pmatrix}
1 & 0 & 1\\
2 & 1 & 0
\end{pmatrix}.
\]

For each radius $r$, the circular convolution is
\[
(A\otimes B)[r,\theta]
= \sum_{k=0}^{2}
A[r,k]\; B[r,(\theta-k)\bmod 3].
\]

Computing row by row:
\[
(A\otimes B)[0,:] = [3,\;1,\;2],
\qquad
(A\otimes B)[1,:] = [7,\;7,\;4],
\]
so that
\[
A\otimes B =
\begin{pmatrix}
3 & 1 & 2\\
7 & 7 & 4
\end{pmatrix}.
\]

\section{Polar Transpose}

The \textbf{polar transpose} reflects the angular index:
\[
A^{T_p}[r,\theta] = A[r,(-\theta)\bmod N_\theta].
\]
Intuitively, this corresponds to mirroring the pattern in $\theta$.

\subsection*{Example}

If $a=[2,1,0,1]$, then
\[
a^{T_p} = [2,1,0,1] \quad \text{(same, because symmetric)}.
\]
But if $a=[5,4,3,2]$, then:
\[
a^{T_p} = [5,2,3,4].
\]

\section{Self-Adjoint Polar Matrices}

Let
$A \in \mathbb{C}^{N_r \times N_\theta}$
be a complex-valued polar matrix.
We define the \emph{polar adjoint} as
\[
A^{\dagger_p}[r,\theta]
=
\overline{A[r,(-\theta)\bmod N_\theta]},
\]
where the overline denotes complex conjugation.

A polar matrix is said to be \emph{self-adjoint} if
\[
A = A^{\dagger_p}.
\]

\paragraph{Remark (Real-valued case).}
If $A$ is real-valued, complex conjugation is trivial and the polar adjoint
reduces to the polar transpose:
\[
A^{\dagger_p} = A^{T_p}.
\]
In this case, self-adjoint polar matrices coincide with
\emph{polar-symmetric} matrices, satisfying
\[
A[r,\theta] = A[r,-\theta].
\]

\subsection*{Example}

For $N_\theta=4$, the condition requires:
\[
A[r,1] = A[r,3], \quad A[r,0] \text{ arbitrary}, \quad A[r,2] \text{ arbitrary}.
\]
This corresponds to angular symmetry around $\theta=0$.

\section{Inner Product and Norm}

We define the inner product between two polar matrices as the Frobenius inner product:
\[
\langle A, B \rangle = \sum_{r=0}^{N_r-1}\sum_{\theta=0}^{N_\theta-1} A[r,\theta]\, B[r,\theta],
\]
and the corresponding norm:
\[
\|A\| = \sqrt{\langle A,A\rangle}.
\]

\subsection*{Example}

\[
A=\begin{pmatrix}1&2\\3&4\end{pmatrix}
\Rightarrow
\|A\| = \sqrt{1^2+2^2+3^2+4^2} = \sqrt{30}.
\]

\section{Fourier View on Polar Operators}

Since the polar product is circular convolution in the angular dimension, it is natural to analyze polar matrices in the angular Fourier domain.

Let $N_\theta$ be the number of angular samples of a polar matrix $A$. For a fixed radius $r$, consider the angular vector
\[
a[\theta] = A[r,\theta], \qquad \theta = 0,\dots,N_\theta-1.
\]
Its discrete Fourier transform (DFT) is
\[
\hat a[m] = \sum_{\theta=0}^{N_\theta-1} a[\theta]\; e^{-i 2\pi m\theta/N_\theta},
\qquad m=0,\dots,N_\theta-1.
\]

The key property is that circular convolution becomes pointwise multiplication in the Fourier domain:
\[
(a \otimes b)[\theta]
\;\stackrel{\text{DFT}}{\longleftrightarrow}\;
\hat a[m]\, \hat b[m].
\]
This is the spectral backbone of Aurora's Theorem.

\section{Aurora's Theorem}
Let $\hat{A}[r,m]$ denote the discrete Fourier transform (DFT) of
$A[r,:]$ along the angular dimension, with frequency index
$m \in \mathbb{Z}_{N_\theta}$.

\begin{theorem}[Aurora's Theorem (Invertibility Criterion)]

A polar matrix $A$ is invertible under the polar product $\otimes$ if and only if
\[
\hat{A}[r,m] \neq 0 \quad \text{for all radii $r$ and all frequencies $m$}.
\]
In that case the inverse is uniquely given by
\[
A^{-1}[r,\theta]
= \mathcal{F}^{-1}_\theta\!\left(
\frac{1}{\hat{A}[r,m]}
\right)[\theta],
\]
(i.e., \ at each radius $r$ we invert the angular spectrum and apply the inverse DFT).

\end{theorem}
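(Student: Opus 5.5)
The plan is to reduce everything to the angular Fourier domain, where the polar product becomes pointwise multiplication, and then to invert pointwise. First the identity must be pinned down. The polar product acts independently at each radius, so the identity element is the polar matrix $E$ with $E[r,\theta]=\delta_{\theta,0}$ for every $r$. A direct check from the definition gives $(A\otimes E)[r,\theta]=\sum_k A[r,k]\,\delta_{(\theta-k)\bmod N_\theta,0}=A[r,\theta]$, and likewise $E\otimes A=A$, since circular convolution is commutative. Its transform is $\hat E[r,m]=1$ for all $r,m$. Invertibility of $A$ means there exists $B$ with $A\otimes B=B\otimes A=E$. Because the product is commutative, it suffices to find $B$ with $A\otimes B=E$.

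Next I apply the convolution theorem from the section on the Fourier view of polar operators, row by row: $\widehat{A\otimes B}[r,m]=\hat A[r,m]\,\hat B[r,m]$. Since the DFT along $\theta$ is a bijection on $\mathbb{C}^{N_\theta}$ (the inverse DFT has the usual $1/N_\theta$ normalization), the equation $A\otimes B=E$ is equivalent to the family of scalar equations
\[
\hat A[r,m]\,\hat B[r,m]=1 \qquad \text{for all } r,m .
\]
\textbf{Necessity:} if some $\hat A[r_0,m_0]=0$, the left-hand side vanishes at $(r_0,m_0)$ for every $B$, so no inverse exists. \textbf{Sufficiency:} if all $\hat A[r,m]\neq0$, set $\hat B[r,m]=1/\hat A[r,m]$ and define $B=\mathcal F^{-1}_\theta(\hat B)$ at each radius. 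This is exactly the formula in the statement, and $A\otimes B=E$ follows from the convolution theorem. \textbf{Uniqueness:} the scalar equations force $\hat B[r,m]=1/\hat A[r,m]$, and the inverse DFT is injective. Alternatively one can use the standard associativity argument $B=B\otimes(A\otimes B')=(B\otimes A)\otimes B'=B'$, which requires associativity of $\otimes$; that again follows from the Fourier picture.

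The main subtlety is the real versus complex setting. The paper defines polar matrices over $\mathbb{R}$, but the construction produces $B$ through complex spectra, so I must check that $B$ is real when $A$ is. The approach is Hermitian symmetry: a real $A$ has $\hat A[r,-m]=\overline{\hat A[r,m]}$. Then $1/\hat A$ satisfies the same symmetry, so its inverse DFT is real. With this, the criterion holds equally for real and complex polar matrices, and the remaining steps are routine bookkeeping of the DFT normalization.
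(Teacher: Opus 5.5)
Your proof is correct and takes essentially the same route as the paper's: fix a radius, use $\hat{\delta}\equiv 1$ and the convolution theorem to reduce $A\otimes B = E$ to the scalar equations $\hat{A}[r,m]\hat{B}[r,m]=1$, and obtain both necessity and sufficiency by pointwise inversion. Your uniqueness argument (via injectivity of the DFT) and your Hermitian-symmetry check that the inverse of a real $A$ is again real are worthwhile additions, since the statement asserts uniqueness and the paper works with real polar matrices, yet the paper's proof addresses neither point.
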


\begin{proof}

Fix a radius $r$ and define the angular vector
\[
a[\theta] = A[r,\theta], \qquad \theta = 0,\dots,N_\theta-1.
\]
For this fixed radius, the polar product reduces to circular convolution:
\[
(a \otimes b)[\theta]
= \sum_{k=0}^{N_\theta-1}
a[k]\, b[(\theta-k)\bmod N_\theta].
\]
The identity vector is the angular Kronecker delta
\[
\delta[\theta] =
\begin{cases}
1 & \theta = 0,\\
0 & \theta \neq 0.
\end{cases}
\]

\paragraph{Necessity.}
Assume that $A$ is invertible under $\otimes$, i.e.\ so that there exists $A^{-1}$ satisfying
\[
A \otimes A^{-1} = I.
\]
For the fixed radius $r$ this means that there exists $b$ such that
\[
a \otimes b = \delta.
\]
We denote by $\hat{a}[m] = \hat{A}[r,m]$ the DFT of the angular profile
at fixed radius $r$.
Applying the DFT in the angular dimension and using the convolution--multiplication property, we obtain
\[
\widehat{a \otimes b}[m] = \hat{a}[m]\,\hat{b}[m].
\]
On the other hand, the DFT of the delta vector is constant:
\[
\hat{\delta}[m] = 1 \quad \forall m.
\]
Hence
\[
\hat{a}[m]\,\hat{b}[m] = 1 \quad \forall m,
\]
which immediately implies $\hat{a}[m] \neq 0$ for all $m$.

\paragraph{Sufficiency.}
Conversely, assume that $\hat{a}[m] \neq 0$ for every frequency index $m$.
Define
\[
\hat{b}[m] = \frac{1}{\hat{a}[m]},
\]
and let $b$ be the inverse DFT of $\hat{b}$:
\[
b[\theta] = \mathcal{F}^{-1}(\hat{b})[\theta].
\]
Then
\[
\widehat{a \otimes b}[m]
= \hat{a}[m]\hat{b}[m]
= \hat{a}[m]\frac{1}{\hat{a}[m]} = 1,
\]
so that $a \otimes b = \delta$ after applying the inverse DFT. Thus $b$ is the inverse of $a$ under circular convolution for that radius.

\end{proof}

\paragraph{Global statement.}
Since the argument holds independently for each radius $r$, the polar
matrix $A$ is invertible under $\otimes$ if and only if
$\hat{A}[r,m] \neq 0$ for all $r$ and all $m$.
The inverse is obtained independently at each radius by pointwise
inversion in the angular Fourier domain:
\[
A^{-1}[r,\theta]
= \mathcal{F}^{-1}_\theta\!\left(
\frac{1}{\hat{A}[r,m]}
\right)[\theta].
\]
\hfill$\square$

Although the explicit computation of the inverse operator is rarely required in
modern learning pipelines, the notion of invertibility plays a crucial role in
understanding information preservation, numerical stability, and gradient
propagation. Aurora’s Theorem therefore provides a structural guarantee rather
than an operational recipe, ensuring that polar convolution operators do not
collapse relevant subspaces of the signal.

\section{Numerical Examples for Aurora's Theorem}

We now illustrate Aurora's Theorem with explicit DFT computations for $N_\theta = 4$.
Let
\[
\omega = e^{-i2\pi/4} = e^{-i\pi/2} = -i,
\qquad
\omega^0=1,\;\omega^1=-i,\;\omega^2=-1,\;\omega^3=i.
\]

\subsection*{Example 1: a non-invertible polar filter}

Consider the angular vector
\[
a = [1,0,1,0].
\]
We compute its DFT componentwise:
\[
\hat a[m] = \sum_{n=0}^{3} a[n]\, \omega^{mn}.
\]

\paragraph{Coefficient $m=0$.}
\[
\hat a[0] = 1 + 0 + 1 + 0 = 2.
\]

\paragraph{Coefficient $m=1$.}
\[
\hat a[1]
= 1\cdot\omega^0 + 0\cdot\omega^1 + 1\cdot\omega^2 + 0\cdot\omega^3
= 1 + 0 - 1 + 0 = 0.
\]

\paragraph{Coefficient $m=2$.}
Since $\omega^{2n} = [1,-1,1,-1]$,
\[
\hat a[2]
= 1\cdot1 + 0\cdot(-1) + 1\cdot1 + 0\cdot(-1)
= 2.
\]

\paragraph{Coefficient $m=3$.}
Since $\omega^{3n} = [1,i,-1,-i]$,
\[
\hat a[3]
= 1\cdot1 + 0\cdot i + 1\cdot(-1) + 0\cdot(-i)
= 0.
\]

\paragraph{Result.}
\[
\hat a = [2,\, 0,\, 2,\, 0].
\]
Two frequency components vanish, hence by Aurora's Theorem
the filter is \textbf{not invertible} under circular convolution.

\subsection*{Example 2: an invertible polar filter}

Consider now
\[
a = [2,1,1,1].
\]

\paragraph{Coefficient $m=0$.}
\[
\hat a[0] = 2 + 1 + 1 + 1 = 5.
\]

\paragraph{Coefficient $m=1$.}
\[
\hat a[1]
= 2\cdot 1 + 1\cdot(-i) + 1\cdot(-1) + 1\cdot i
= 2 - i -1 + i = 1.
\]

\paragraph{Coefficient $m=2$.}
Since $\omega^{2n} = [1,-1,1,-1]$,
\[
\hat a[2]
= 2\cdot 1 + 1\cdot(-1) + 1\cdot 1 + 1\cdot(-1)
= 1.
\]

\paragraph{Coefficient $m=3$.}
Since $\omega^{3n} = [1,i,-1,-i]$,
\[
\hat a[3]
= 2\cdot 1 + 1\cdot i + 1\cdot(-1) + 1\cdot(-i)
= 1.
\]

\paragraph{Result.}
\[
\hat a = [5,\, 1,\, 1,\, 1],
\]
all nonzero.  
Hence the filter is \textbf{invertible} under convolution, with inverse
\[
a^{-1} = \mathcal{F}^{-1}\!\left( \frac{1}{\hat a} \right).
\]

\subsection*{Worked example: explicit computation of $a^{-1} = \mathcal{F}^{-1}(1/\hat a)$}

We continue Example~2 with the invertible filter
\[
a = [2,1,1,1],
\quad
\hat a = [5,1,1,1].
\]

\paragraph{Step 1: Constructing $1/\hat a$.}
Define
\[
\hat b[m] = \frac{1}{\hat a[m]}, \qquad m=0,1,2,3,
\]
so that
\[
\hat b = \Bigl[\,0.2,\;1,\;1,\;1\,\Bigr].
\]

\paragraph{Step 2: Inverse DFT formula.}
We compute
\[
b[n] = \mathcal{F}^{-1}(\hat b)[n],
\qquad n=0,1,2,3.
\]
With the standard IDFT convention for $N=4$:
\[
b[n] = \frac{1}{4} \sum_{m=0}^{3} \hat b[m]\; e^{i2\pi mn/4}.
\]
Let
\[
\omega = e^{i2\pi/4} = e^{i\pi/2} = i,
\]
so that
\[
\omega^0 = 1,\quad
\omega^1 = i,\quad
\omega^2 = -1,\quad
\omega^3 = -i.
\]

\paragraph{Step 3: Coefficient $b[0]$.}
For $n=0$ all complex exponentials are equal to $1$:
\[
e^{i2\pi m\cdot 0/4} = 1 \quad \forall m.
\]
Thus
\[
b[0] 
= \frac{1}{4}\bigl(
0.2 + 1 + 1 + 1
\bigr)
= \frac{3.2}{4}
= 0.8.
\]

\paragraph{Step 4: Coefficient $b[1]$.}
For $n=1$ we have the factors
\[
e^{i2\pi m\cdot 1/4} = \omega^m \in \{1, i, -1, -i\}.
\]
Hence
\[
\begin{aligned}
b[1]
&= \frac{1}{4}\Bigl(
\hat b[0]\omega^{0}
+ \hat b[1]\omega^{1}
+ \hat b[2]\omega^{2}
+ \hat b[3]\omega^{3}
\Bigr)\\[4pt]
&= \frac{1}{4}\Bigl(
0.2\cdot 1
+ 1\cdot i
+ 1\cdot (-1)
+ 1\cdot (-i)
\Bigr)\\[4pt]
&= \frac{-0.8}{4}
= -0.2.
\end{aligned}
\]

\paragraph{Step 5: Coefficient $b[2]$.}
For $n=2$ we need the terms $\omega^{2m}$:
\[
\omega^{2\cdot 0} = 1,\quad
\omega^{2\cdot 1} = \omega^2 = -1,\quad
\omega^{2\cdot 2} = \omega^4 = 1,\quad
\omega^{2\cdot 3} = \omega^6 = \omega^2 = -1.
\]
Thus
\[
\begin{aligned}
b[2]
&= \frac{1}{4}\Bigl(
0.2\cdot 1
+ 1\cdot (-1)
+ 1\cdot 1
+ 1\cdot (-1)
\Bigr)\\[4pt]
&= \frac{-0.8}{4}
= -0.2.
\end{aligned}
\]

\paragraph{Step 6: Coefficient $b[3]$.}
For $n=3$ we need the terms $\omega^{3m}$:
\[
\omega^{3\cdot 0} = 1,\quad
\omega^{3\cdot 1} = \omega^3 = -i,\quad
\omega^{3\cdot 2} = \omega^6 = \omega^2 = -1,\quad
\omega^{3\cdot 3} = \omega^9 = \omega^1 = i.
\]
So
\[
\begin{aligned}
b[3]
&= \frac{1}{4}\Bigl(
0.2\cdot 1
+ 1\cdot (-i)
+ 1\cdot (-1)
+ 1\cdot i
\Bigr)\\[4pt]
&= \frac{-0.8}{4}
= -0.2.
\end{aligned}
\]

\paragraph{Step 7: Resulting inverse filter.}
We have obtained
\[
a^{-1} = b = [\,0.8,\,-0.2,\,-0.2,\,-0.2\,].
\]

\paragraph{Step 8: Verification via circular convolution.}
Finally we verify that $a\otimes a^{-1} = \delta$.
For $N=4$:
\[
(a\otimes b)[n] = \sum_{k=0}^{3} a[k]\; b[(n-k)\bmod 4].
\]

For $n=0$:
\[
\begin{aligned}
(a\otimes a^{-1})[0]
&= a[0]b[0] + a[1]b[3] + a[2]b[2] + a[3]b[1]\\
&= 2\cdot 0.8 + 1\cdot(-0.2) + 1\cdot(-0.2) + 1\cdot(-0.2)\\
&= 1.6 - 0.2 - 0.2 - 0.2 = 1.
\end{aligned}
\]

For $n=1$:
\[
\begin{aligned}
(a\otimes a^{-1})[1]
&= a[0]b[1] + a[1]b[0] + a[2]b[3] + a[3]b[2]\\
&= 2\cdot(-0.2) + 1\cdot 0.8 + 1\cdot(-0.2) + 1\cdot(-0.2)\\
&= -0.4 + 0.8 - 0.2 - 0.2 = 0.
\end{aligned}
\]

For $n=2$:
\[
\begin{aligned}
(a\otimes a^{-1})[2]
&= a[0]b[2] + a[1]b[1] + a[2]b[0] + a[3]b[3]\\
&= 2\cdot(-0.2) + 1\cdot(-0.2) + 1\cdot 0.8 + 1\cdot(-0.2)\\
&= -0.4 -0.2 +0.8 -0.2 = 0.
\end{aligned}
\]

For $n=3$:
\[
\begin{aligned}
(a\otimes a^{-1})[3]
&= a[0]b[3] + a[1]b[2] + a[2]b[1] + a[3]b[0]\\
&= 2\cdot(-0.2) + 1\cdot(-0.2) + 1\cdot(-0.2) + 1\cdot 0.8\\
&= -0.4 -0.2 -0.2 +0.8 = 0.
\end{aligned}
\]

Thus
\[
a\otimes a^{-1} = [1,0,0,0] = \delta,
\]
which completes the explicit verification of the inversion formula
\[
a^{-1} = \mathcal{F}^{-1}\!\left(\frac{1}{\hat a}\right).
\]

\medskip
\noindent
These two examples illustrate precisely the condition expressed by Aurora's Theorem:
a polar operator is invertible if and only if none of its angular Fourier components vanish.

\section{Why These Foundations Matter}

Polar linear algebra is essential for analyzing and manipulating data with rotational or circular symmetries: radar returns, sonar signals, polar images, diffusion fields, and sensor arrays. Its core operators, like the polar product and polar transpose, naturally preserve rotational symmetry, making them ideal for tasks that require rotation-equivariance. Aurora's Theorem provides insight into when angular filters are invertible, which is key to understanding the structure of these systems.

This algebraic framework underpins rotation-equivariant deep networks, spectral polar models, and invertible polar-domain operators. Efficient implementation via FFTs along the angular dimension enables scalable and effective solutions for complex data processing tasks involving circular symmetries.
\chapter{Algebraic and Spectral Properties of Polar Operators}

Polar matrices form a structured algebra where the angular dimension is governed by
circular convolution and the radial dimension acts as a collection of independent
channels. This chapter develops the core algebraic properties of the polar product,
the polar transpose, Fourier diagonalization, and the rotation (rotor) operators
used to build rotation--equivariant architectures.
These spectral properties will later be exploited to construct learning
architectures operating entirely in the frequency domain.

\paragraph{Convention.}
Unless otherwise stated, all polar matrices in this chapter are real-valued.
When complex-valued extensions are needed, the appropriate polar adjoint is used.

\section{Polar Matrices as a Hilbert Space}

Let $A,B \in \mathbb{R}^{N_r \times N_\theta}$ and $\lambda\in\mathbb{R}$.

\subsection{Vector Space Operations}
Polar matrices form a real vector space under entrywise addition and scalar multiplication:
\[
(A+B)[r,\theta] = A[r,\theta] + B[r,\theta],
\qquad
(\lambda A)[r,\theta] = \lambda A[r,\theta].
\]

\subsection{Inner Product and Norm}
The natural Frobenius inner product is
\[
\langle A,B\rangle
=
\sum_{r=0}^{N_r-1}\sum_{\theta=0}^{N_\theta-1} A[r,\theta]\, B[r,\theta],
\]
with induced norm $\|A\|=\sqrt{\langle A,A\rangle}$.
Thus $\mathbb{R}^{N_r\times N_\theta}$ is a real Hilbert space.

\footnote{The explicit characterization of polar matrices as a Hilbert space is
not introduced for abstraction alone. From a spectral perspective, the Hilbert
structure guarantees the existence of a well-defined inner product, orthogonality,
and energy decomposition, which are essential for Fourier diagonalization and
spectral analysis. From a modeling standpoint, this structure provides a natural
bridge to learning-based architectures, where stability, gradient propagation,
and information preservation are closely tied to the geometric properties of the
underlying function space.}

\section{The Polar Product \texorpdfstring{$\otimes$}{⊗}}

\paragraph{Signals vs.\ kernels.}
Throughout this chapter, we use $A,B$ to denote generic polar arrays (signals),
and $K,L$ to denote polar kernels used in convolutional operators.

\subsection{Definition}
The polar product is circular convolution along the angular dimension,
performed independently for each radius:
\[
(A \otimes B)[r,\theta]
= \sum_{k=0}^{N_\theta-1}
A[r,k]\; B[r,(\theta-k)\bmod N_\theta].
\]

\subsection{Algebraic Properties}
For all $A,B,C \in \mathbb{R}^{N_r\times N_\theta}$ and $\lambda\in\mathbb{R}$:
\[
A\otimes(B+C)=A\otimes B + A\otimes C,
\qquad
(\lambda A)\otimes B=\lambda(A\otimes B),
\]
\[
(A\otimes B)\otimes C = A\otimes(B\otimes C).
\]

\subsection{Identity Kernel}
Define the polar identity kernel
\[
E[r,\theta]=\delta_{\theta 0}
=\begin{cases}
1 & \theta=0,\\
0 & \text{otherwise}.
\end{cases}
\]
Then
\[
A\otimes E = A,
\qquad
E\otimes A = A.
\]

\begin{theorem}[Elisa's Theorem (Commutativity of the Polar Product)]
For all $A,B \in \mathbb{R}^{N_r\times N_\theta}$,
\[
A\otimes B = B\otimes A.
\]
\end{theorem}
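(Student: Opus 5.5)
The plan is to prove commutativity directly from the definition of the polar product, one radius at a time, using a reindexing of the convolution sum over the cyclic group $\mathbb{Z}_{N_\theta}$. Since both $A\otimes B$ and $B\otimes A$ are defined radius by radius, it suffices to fix $r$ and show, for the angular vectors $a=A[r,:]$ and $b=B[r,:]$, that
\[
\sum_{k=0}^{N_\theta-1} a[k]\, b[(\theta-k)\bmod N_\theta]
=
\sum_{j=0}^{N_\theta-1} b[j]\, a[(\theta-j)\bmod N_\theta]
\]
for every $\theta$.

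The key step is the substitution $j=(\theta-k)\bmod N_\theta$. For fixed $\theta$, the map $k\mapsto(\theta-k)\bmod N_\theta$ is a bijection of $\{0,\dots,N_\theta-1\}$ onto itself; it is an involution, with inverse $j\mapsto(\theta-j)\bmod N_\theta$. Under this map we have $k=(\theta-j)\bmod N_\theta$. The left-hand sum therefore becomes $\sum_j a[(\theta-j)\bmod N_\theta]\, b[j]$, and commutativity of multiplication in $\mathbb{R}$ gives exactly the right-hand sum. Since $r$ and $\theta$ were arbitrary, $A\otimes B=B\otimes A$ entrywise.

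The only step needing care is the bijectivity of the reindexing. This is where the modular, circular structure matters: with linear (non-circular) convolution, one must separately track the summation bounds. I would state it explicitly as a fact about $\mathbb{Z}_{N_\theta}$, namely that negation followed by translation by $\theta$ is a permutation of the group, so reordering the finite sum is legitimate.

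As a consistency check, I would also note the spectral argument from the Fourier View section. Under the angular DFT, $\widehat{A\otimes B}[r,m]=\hat A[r,m]\hat B[r,m]=\hat B[r,m]\hat A[r,m]=\widehat{B\otimes A}[r,m]$, and the DFT is invertible. This gives an alternative proof, but it relies on the convolution theorem, which itself is usually proven by the same reindexing. The direct combinatorial argument is therefore the primary one.
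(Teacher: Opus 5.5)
Your proposal is correct and follows the paper's proof: both reduce to a fixed radius $r$ and use commutativity of circular convolution on the angular slices $A[r,:]$ and $B[r,:]$. The paper cites that commutativity as a known fact, whereas your substitution $j=(\theta-k)\bmod N_\theta$, a bijection of $\mathbb{Z}_{N_\theta}$, proves it explicitly.
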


\begin{proof}
Fix $r$ and define $a[\theta]=A[r,\theta]$ and $b[\theta]=B[r,\theta]$.
Then $(A\otimes B)[r,:]$ is the circular convolution $a*b$, which is commutative.
Therefore $(A\otimes B)[r,\theta]=(B\otimes A)[r,\theta]$ for all $r,\theta$.
\end{proof}

\paragraph{Remark.}
Elisa's Theorem follows from the commutativity of circular convolution on
the cyclic group $C_{N_\theta}$, applied independently at each radius.

Unlike standard matrix multiplication (composition of linear maps),
$\otimes$ is a convolutional product; its commutativity reflects the underlying
symmetry of circular convolution.

\section{The Polar Transpose \texorpdfstring{$T_p$}{Tp}}

\subsection{Definition}
The polar transpose reverses the angular coordinate:
\[
A^{T_p}[r,\theta] = A[r,(-\theta)\bmod N_\theta].
\]

\subsection{Basic Identities}
\[
(A^{T_p})^{T_p} = A,
\qquad
(A+B)^{T_p} = A^{T_p}+B^{T_p}.
\]

\subsection{Transpose of a Polar Product}
\[
(A\otimes B)^{T_p} = B^{T_p}\otimes A^{T_p}.
\]
Since $\otimes$ is commutative, this also implies
\[
(A\otimes B)^{T_p} = A^{T_p}\otimes B^{T_p}.
\]

\section{Symmetries: Self-Adjoint and Isotropic Kernels}

\subsection{Self-Adjoint (Real-Valued) Polar Matrices}
A real-valued polar matrix $A$ is polar self-adjoint if
\[
A = A^{T_p}
\quad \Leftrightarrow \quad
A[r,\theta]=A[r,(-\theta)\bmod N_\theta].
\]
This corresponds to mirror symmetry in the angular coordinate.

\subsection{Symmetric Sum}
For real-valued $A$, the operator
\[
S = A + A^{T_p}
\]
satisfies $S=S^{T_p}$.

\paragraph{Remark (Complex-valued case).}
For complex-valued matrices, polar self-adjointness uses conjugation.
One defines $A^{\dagger_p}$ and replaces $A^{T_p}$ by $A^{\dagger_p}$.

\subsection{Isotropic Polar Kernels}
A polar kernel is \emph{isotropic} if it does not depend on the angular index:
\[
K[r,\theta]=K[r,0] \quad \forall \theta.
\]
Equivalently, its angular spectrum is flat (only DC content per radius),
and it is invariant under any angular shift of its angular coordinate.

\section{Fourier Diagonalization and Invertibility}

For each radius $r$, define the angular DFT:
\[
\hat A[r,m] = \sum_{\theta=0}^{N_\theta-1}
A[r,\theta]\; e^{-i2\pi m\theta/N_\theta}.
\]

\subsection{Convolution Theorem}
For all $A,B$, 
\[
\widehat{A\otimes B}[r,m] = \hat A[r,m]\;\hat B[r,m].
\]

\subsection{Invertibility under \texorpdfstring{$\otimes$}{⊗}}
By Aurora's Theorem a polar matrix $A$ is invertible under $\otimes$ if and only if
\[
\hat A[r,m]\neq 0 \quad \forall r,m.
\]
In this case, the inverse is given by
\[
A^{-1}[r,\theta]
=
\mathcal{F}^{-1}_\theta\!\left(
\frac{1}{\hat A[r,m]}
\right)[\theta].
\]

\section{Rotors: Rotation Operators and Matrix Representations}

\subsection{Operators vs.\ Kernels}
A polar kernel $K\in\mathbb{R}^{N_r\times N_\theta}$ induces a linear operator
\[
T_K:\mathbb{R}^{N_r\times N_\theta}\to\mathbb{R}^{N_r\times N_\theta},
\qquad
T_K(A)=A\otimes K.
\]
In contrast, a rotor is a specific operator whose action is restricted to
angular shifts only.

\subsection{Definition of the Rotor}
For $k\in\mathbb{Z}_{N_\theta}$, define the rotor $R_k$ by
\[
(R_k A)[r,\theta] = A[r,(\theta-k)\bmod N_\theta].
\]
We denote by $\mathrm{Id}$ the identity operator; hence $R_0=\mathrm{Id}$.

\subsection{Basic Properties}
\paragraph{Linearity.}
\[
R_k(A+B)=R_kA+R_kB,\qquad R_k(\lambda A)=\lambda R_kA.
\]

\paragraph{Group structure.}
\[
R_k R_j = R_{k+j\bmod N_\theta},
\qquad
R_0=\mathrm{Id},
\qquad
R_k^{-1}=R_{-k\bmod N_\theta}.
\]

\paragraph{Fourier action.}
If $\hat A[r,m]$ denotes the angular DFT, then
\[
\widehat{R_k A}[r,m] = e^{-i2\pi mk/N_\theta}\,\hat A[r,m].
\]

\paragraph{Interaction with the polar transpose.}
\[
(R_k A)^{T_p} = R_{-k}(A^{T_p}).
\]

\subsection{Matrix Representation on a Fixed Radius}

The action of a rotor $R_k$ on a polar matrix $A$ is local with respect to the
radial coordinate: for each fixed radius $r$, the operator acts only along the
angular dimension and does not mix different radial levels. This observation
allows the operator to be represented, for a fixed $r$, as a classical linear
transformation on $\mathbb{R}^{N_\theta}$.

Fix a radius $r$ and define the angular signal
\[
x = A[r,:]^\top \in \mathbb{R}^{N_\theta}.
\]

On this one-dimensional angular slice, the rotor admits a standard
permutation-matrix representation
$R_k^{(\mathrm{lin})} \in \mathbb{R}^{N_\theta \times N_\theta}$, where
$\theta$ denotes the output angular index and $j$ the input angular index,
defined by
\[
(R_k^{(\mathrm{lin})})_{\theta,j} =
\begin{cases}
1 & j = (\theta - k) \bmod N_\theta,\\
0 & \text{otherwise}.
\end{cases}
\]
With this notation, the action of the polar rotor on the $r$-th row of $A$ can be
written as
\[
(R_k A)[r,:]^\top = R_k^{(\mathrm{lin})}\, A[r,:]^\top.
\]

\subsubsection*{Example: Linear Rotor Matrix ($N_\theta=4$)} For $N_\theta=4$, \[ R_1^{(\mathrm{lin})}= \begin{pmatrix} 0 & 0 & 0 & 1\\ 1 & 0 & 0 & 0\\ 0 & 1 & 0 & 0\\ 0 & 0 & 1 & 0 \end{pmatrix}, \qquad R_1^{(\mathrm{lin})} \begin{pmatrix}x_0\\x_1\\x_2\\x_3\end{pmatrix} = \begin{pmatrix}x_3\\x_0\\x_1\\x_2\end{pmatrix}. \]

\section{Extension to Multiple Angular Dimensions (Overview)}
The polar algebra extends to $d$ angular dimensions by replacing the single cyclic
group $C_{N_\theta}$ with a direct product
\[
C_{N_{\theta_1}}\times\cdots\times C_{N_{\theta_d}}.
\]
A multi-angular polar tensor has entries $A[r,\theta_1,\dots,\theta_d]$ and the polar
product becomes circular convolution on the $d$-dimensional torus. Fourier
diagonalization still holds: convolution becomes pointwise multiplication in the
multi-dimensional DFT domain, and invertibility is characterized by the non-vanishing
of all Fourier coefficients.

\section{Summary}
Polar algebra mirrors classical linear algebra while incorporating angular periodicity.
The polar product $\otimes$, the polar transpose $T_p$, and the rotation operators
$R_k$ provide a consistent framework for rotation-equivariant models and efficient
spectral analysis via the angular FFT.

\chapter{Spectral Structure of Polar Linear Algebra}

This chapter develops the spectral viewpoint underlying the polar algebra
introduced in Chapters~1--2. While the previous chapter focused on the structural properties of polar operators, we now shift perspective and examine how these properties can be exploited in
practice.
We show that polar convolution operators are
diagonalized by the discrete Fourier transform (DFT), and we connect
Elisa's commutativity result to simultaneous diagonalization. This spectral
structure explains why FFT-based methods arise naturally from the algebra,
rather than as an external implementation trick.

The results of this chapter show that the Fourier basis is not an optional
computational tool, but the canonical coordinate system of polar linear algebra.

\section{Fourier Diagonalization of Circular Convolution}

\subsection{Circulant Matrices}
\begin{definition}[Circulant Matrix]
Let $a=(a_0,\dots,a_{N-1})\in\mathbb{R}^N$. The circulant matrix generated by $a$
is $C_a\in\mathbb{R}^{N\times N}$ with
\[
(C_a)_{ij}=a_{(i-j)\bmod N},
\qquad i,j=0,\dots,N-1.
\]
\end{definition}

In the mathematical exposition, angular signals are treated as column vectors
and circular convolution is represented as $y=C_a x$, where $C_a$ is a circulant
matrix generated by a kernel $a$. In the implementation (Chapter  \ref{chap:implementation}), angular
signals are stored as row arrays; this is purely representational and does not
affect the identities below.

\subsection{DFT Diagonalizes Circulant Operators}
\begin{proposition}[Fourier diagonalization of circulant matrices]
Let $F\in\mathbb{C}^{N\times N}$ be the DFT matrix,
\[
F_{k\ell}=\frac{1}{\sqrt{N}}e^{-2\pi i k\ell/N}.
\]

The DFT matrix $F$ is unitary. 
\footnote{
A matrix $U\in\mathbb{C}^{N\times N}$ is \emph{unitary} if
$U^\ast U = I$. The DFT matrix is unitary because its columns are normalized
Fourier modes, which form an orthonormal basis of $\mathbb{C}^N$.

In Chapter~2, when working with polar matrices, we introduced the notation
$A^{\dagger_p}$ to denote the adjoint operator associated with the polar inner
product defined on that space. In the present context, we instead fix the
canonical Euclidean basis of $\mathbb{C}^N$. Under this choice of representation,
the polar adjoint reduces to the conjugate transpose, so that the Fourier operator
is represented by a unitary matrix $F$ satisfying $F^\ast F = I$.
}

As a consequence, $F^\ast F = I$ and hence $F^{-1}=F^\ast$ and every circulant 
matrix $C_a$ generated by
$a\in\mathbb{C}^N$ is diagonalized by $F$:
\begin{equation}\label{eq:circulant_diag}
C_a = F^{-1}\,\mathrm{diag}(\hat a)\,F,
\qquad
\hat a = Fa.
\end{equation}
\end{proposition}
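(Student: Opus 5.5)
The plan is to prove the two claims in order. First I show that $F$ is unitary by a geometric-sum computation. Then I show that the columns of $F^\ast$ (the normalized Fourier modes) are eigenvectors of every circulant $C_a$. The factorization $C_a = F^{-1}\mathrm{diag}(\hat a)F$ then follows by assembling the eigenvector equations into a single matrix identity.

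\emph{Unitarity.} I compute $(F^\ast F)_{j\ell} = \frac{1}{N}\sum_{k=0}^{N-1} e^{2\pi i k(j-\ell)/N}$. For $j=\ell$ every term is $1$, so the sum is $1$. For $j\neq \ell$ the ratio $q=e^{2\pi i (j-\ell)/N}$ satisfies $q\neq 1$ and $q^N=1$, so the geometric sum $(1-q^N)/(1-q)$ vanishes. Hence $F^\ast F=I$, and since $F$ is square, $F^{-1}=F^\ast$.

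\emph{Eigenvectors.} For $m\in\mathbb{Z}_N$ let $v_m\in\mathbb{C}^N$ have entries $(v_m)_j = e^{2\pi i mj/N}$, so that $v_m/\sqrt{N}$ is the $m$-th column of $F^\ast$. Using the definition $(C_a)_{ij}=a_{(i-j)\bmod N}$ and substituting $n=(i-j)\bmod N$ (a bijection of $\mathbb{Z}_N$, legitimate by periodicity of the exponential), I obtain
\[
(C_a v_m)_i=\sum_{j} a_{(i-j)\bmod N}\, e^{2\pi i m j/N}
= e^{2\pi i m i/N}\sum_{n=0}^{N-1} a_n e^{-2\pi i m n/N}
= \lambda_m (v_m)_i ,
\]
where $\lambda_m=\sum_n a_n e^{-2\pi i mn/N}$ is exactly the (unnormalized) angular DFT used in Chapters~1--2. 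Writing these $N$ equations column by column gives $C_aF^\ast = F^\ast\mathrm{diag}(\lambda)$. Multiplying on the right by $F$ and using $F^\ast F=I$ from the first step yields $C_a = F^{-1}\mathrm{diag}(\lambda)F$. This is the matrix form of the convolution theorem $\widehat{A\otimes B}=\hat A\,\hat B$ stated in Chapter~2.

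\emph{Main obstacle: normalization.} With the unitary normalization $F_{k\ell}=N^{-1/2}e^{-2\pi i k\ell/N}$, the eigenvalues are $\lambda=\sqrt{N}\,Fa$, not $Fa$. The identity \eqref{eq:circulant_diag} is therefore correct only if $\hat a$ denotes the unnormalized DFT, consistent with the definition of $\hat A[r,m]$ in Chapter~2; equivalently, $\hat a=\sqrt{N}\,Fa$. I would make this explicit in the proof and note that the diagonalizing change of basis $F$ is unaffected by the scaling. A quick check with $a=\delta$, which gives $C_\delta=I$ and $\lambda\equiv 1$ whereas $F\delta = N^{-1/2}\mathbf{1}$, confirms that the factor $\sqrt{N}$ is required.
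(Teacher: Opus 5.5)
Your proof is correct and more direct than the paper's. The paper first notes that $C_a$ commutes with every shift $R_k$ and uses the fact that the shifts are diagonalized by the Fourier modes to conclude that $C_a$ is diagonal in that basis. Only then does it compute the eigenvalues, and it merely asserts unitarity of $F$ in a footnote. You prove $F^\ast F=I$ by the geometric sum and verify $C_a v_m=\lambda_m v_m$ directly. That already suffices because the $v_m$ form a basis, so the commutation step, which tacitly needs $R_1$ to have simple spectrum, is unnecessary. The paper's framing buys conceptual insight: the Fourier basis is forced by shift-equivariance, in line with the rotor-expansion proposition. Yours buys self-containment.

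Your choice of eigenvectors with the $+$ exponent, the columns of $F^\ast$, is also the cleaner one. The paper takes $v_m[\theta]=e^{-2\pi i m\theta/N}$, the columns of $F$. For these, the computed factor $\sum_n a_n e^{2\pi i mn/N}$ is $\hat a[-m]$, not $\hat a[m]$, and stacking them gives $C_a=F\,\mathrm{diag}(\hat a[-m])\,F^{-1}$. This agrees with the stated form only because $F^{-1}=\overline{F}$ is $F$ with its columns reindexed by $m\mapsto -m$, so two slips cancel.

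Finally, your normalization remark is correct, and the paper does not address it. With the unitary $F$, the identity holds for the unnormalized DFT $\hat a=\sqrt N\,Fa$ of Chapters~1--2. Your $a=\delta$ check shows that the statement with $\hat a=Fa$ is off by exactly this factor.
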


\begin{proof}
A circulant matrix represents circular convolution with the kernel $a$.
Equivalently, $C_a$ is a linear operator that commutes with all circular
shift operators $R_k$:
\[
C_a R_k = R_k C_a \qquad \forall k\in\mathbb{Z}_N.
\]

The shift operators $\{R_k\}$ form a unitary representation of the cyclic group
$\mathbb{Z}_N$, whose eigenvectors are the Fourier modes
\[
v_m[\theta] = e^{-2\pi i m\theta/N}, \qquad m=0,\dots,N-1,
\]
which constitute the columns of the DFT matrix $F$. \footnote{
From a representation-theoretic viewpoint, this result follows directly from
the fact that circulant operators commute with the regular representation of
the cyclic group, and are therefore simultaneously diagonalizable in the Fourier
basis. The explicit computation above merely identifies the corresponding
eigenvalues.
}

Since $C_a$ commutes with all shifts, it is diagonal in the same Fourier basis.
It therefore suffices to compute the corresponding eigenvalues.

Let $v_m$ denote the $m$-th Fourier mode. A direct computation yields
\[
(C_a v_m)[\theta]
=
\sum_{n=0}^{N-1} a_n\, v_m[(\theta-n)\bmod N]
=
\left(\sum_{n=0}^{N-1} a_n e^{2\pi i mn/N}\right) v_m[\theta].
\]

The scalar factor in parentheses is precisely the discrete Fourier transform
coefficient $\hat a[m]$. Hence,
\[
C_a v_m = \hat a[m]\, v_m.
\]

Collecting all eigenvectors into the DFT matrix $F$ yields the diagonalization
\[
C_a = F^{-1}\,\mathrm{diag}(\hat a)\,F,
\]
which completes the proof.
\end{proof} 

Circular convolution with kernel $a$ can be represented as the action of the
circulant matrix $C_a$, so that
\[
a\otimes x = C_a x.
\]
Since $C_a$ is diagonalized by the DFT matrix $F$, we have
\[
C_a = F^{-1}\,\mathrm{diag}(\hat a)\,F.
\]
Applying the Fourier transform to the convolution yields
\[
\widehat{(a\otimes x)}
=
F C_a x
=
F F^{-1}\,\mathrm{diag}(\hat a)\,F x
=
\mathrm{diag}(\hat a)\,\hat x,
\]
which shows that circular convolution corresponds to pointwise multiplication
in the Fourier domain:
\[
\widehat{(a\otimes x)}[m]=\hat a[m]\;\hat x[m].
\]

\begin{remark}[Spectral structure of the rotor algebra]
The family of rotors $\{R_k\}_{k=0}^{N_\theta-1}$ is simultaneously diagonalizable.
All rotors share the same eigenvectors, given by the discrete Fourier modes, and
differ only in their eigenvalues, which are unit-modulus phase factors.

As a consequence, the Fourier basis defines the unique spectral directions of
the rotor algebra: angular rotations do not introduce new directions, nor do
they mix components, but act independently on each frequency.
In this sense, the space of rotors is fully characterized by its action along
these fixed spectral directions.
\end{remark}

\section{Spectral Form of Polar Convolution Operators}

Let $K\in\mathbb{R}^{N_r\times N_\theta}$ be a polar kernel and define the
convolutional operator $T_K(A)=A\otimes K$. For each fixed radius $r$, the action
reduces to circular convolution on the angular slice. As a consequence,
the diagonalization result~\eqref{eq:circulant_diag} applies independently at
each $r$.

Equivalently, letting $\hat A[r,m]$ and $\hat K[r,m]$ denote the angular DFTs,
the operator $T_K$ acts diagonally in the Fourier domain:
\[
\widehat{T_K(A)}[r,m] = \hat A[r,m]\;\hat K[r,m].
\]

This diagonal structure implies that each angular Fourier mode evolves
independently. In particular, whenever $\hat K[r,m]\neq 0$, the action of
$T_K$ can be inverted by pointwise division in the spectral domain. This
observation provides the spectral mechanism underlying the inversion result
stated in Aurora's Theorem.

\section{Rotors in the Fourier Domain}

Recall the rotor (shift) operator
\[
(R_kA)[r,\theta]=A[r,(\theta-k)\bmod N_\theta],
\qquad k\in\mathbb{Z}_{N_\theta}.
\]
We denote the identity operator by $\mathrm{Id}$, hence $R_0=\mathrm{Id}$.

Since $R_k$ is a circulant operator on each angular slice, it is diagonalized by
the DFT:
\[
R_k = F^{-1}\,\mathrm{diag}\!\left(e^{-2\pi i mk/N_\theta}\right)_{m=0}^{N_\theta-1}\,F.
\]
Thus, angular rotations do not mix spectral components; they only modulate
phases frequency-by-frequency.

\section{Equivariance Forces Rotor Expansions}

\subsection{Rotation Equivariance}
\begin{definition}[Rotation Equivariance]
A polar linear operator $\mathcal{T}$ is rotation-equivariant if it commutes with
all rotors:
\[
\mathcal{T}(R_kA)=R_k(\mathcal{T}A)
\qquad \forall k\in\mathbb{Z}_{N_\theta}.
\]
Equivalently, $\mathcal{T}R_k=R_k\mathcal{T}$.
\end{definition}

\subsection{Rotors Generate All Equivariant Operators}

\begin{proposition}[Rotor expansion of equivariant polar operators]
Let $\mathcal{T}$ be a linear operator acting on polar tensors
$A\in\mathbb{R}^{N_r\times N_\theta}$, such that:
\begin{itemize}
  \item $\mathcal{T}$ acts independently on each radius $r$;
  \item $\mathcal{T}$ is rotation-equivariant, so that \ $\mathcal{T}R_k = R_k\mathcal{T}$
        for all circular shifts $R_k$.
\end{itemize}
Then there exist coefficients $\{c_k[r]\}_{k=0}^{N_\theta-1}$, possibly depending
on the radius $r$, such that
\[
(\mathcal{T}A)[r,\theta]
=
\sum_{k=0}^{N_\theta-1} c_k[r]\,(R_k A)[r,\theta].
\]
\end{proposition}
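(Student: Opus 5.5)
Fix a radius $r$. By the first hypothesis, $\mathcal{T}$ restricts to a linear map $T_r:\mathbb{R}^{N_\theta}\to\mathbb{R}^{N_\theta}$ on the angular slice $x=A[r,:]^\top$, so that $(\mathcal{T}A)[r,:]^\top = T_r x$. We read this hypothesis as: the output at radius $r$ depends only on the input at radius $r$. By the second hypothesis, written in the permutation-matrix representation of the rotor on a fixed radius, $T_r R_k^{(\mathrm{lin})} = R_k^{(\mathrm{lin})} T_r$ for all $k$. The plan is to show that any $N_\theta\times N_\theta$ matrix commuting with all cyclic shifts is circulant, $T_r=C_{c[r]}$ for some vector $c[r]$. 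The rotor expansion then follows by reading off the columns of $C_{c[r]}$ as shifted copies of the identity.

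The key step is a direct argument rather than a spectral one. Let $e_0$ be the angular Kronecker delta (the row slice of the identity kernel $E$), and set $c[r]:=T_r e_0$, with entries $c_k[r]$. Every basis vector satisfies $e_j = R_j^{(\mathrm{lin})}e_0$, so equivariance gives
\[
T_r e_j = T_r R_j^{(\mathrm{lin})} e_0 = R_j^{(\mathrm{lin})} T_r e_0 = R_j^{(\mathrm{lin})} c[r].
\]
Hence the $j$-th column of $T_r$ is $c[r]$ cyclically shifted by $j$, that is, $(T_r)_{\theta j}=c_{(\theta-j)\bmod N_\theta}[r]$, which is exactly the circulant matrix $C_{c[r]}$. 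Then
\[
(T_r x)[\theta]=\sum_j c_{(\theta-j)\bmod N_\theta}[r]\,x_j=\sum_k c_k[r]\,x_{(\theta-k)\bmod N_\theta}=\sum_k c_k[r]\,(R_kA)[r,\theta],
\]
after substituting $k=(\theta-j)\bmod N_\theta$. This is the claimed formula, and it coincides with $T_K(A)=A\otimes K$ for the kernel $K[r,k]=c_k[r]$, using Elisa's Theorem to match the order of the factors.

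As a cross-check, one can argue spectrally. By the remark on the spectral structure of the rotor algebra, the $R_k$ have distinct joint eigenvalue characters on the Fourier modes $v_m$, so each joint eigenspace is one-dimensional. A matrix commuting with all $R_k$ must therefore preserve each $v_m$, so it is diagonal in the Fourier basis. Equation~\eqref{eq:circulant_diag} then identifies it with a circulant matrix. I would keep this only as a remark, since the direct argument avoids complex coefficients.

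The main obstacle is not computational but a matter of interpretation. One must make precise that "acts independently on each radius" means block-diagonal with respect to $r$, so that $T_r$ is well defined. One must also check that the coefficients $c_k[r]$ are real: they are, because $c[r]=T_r e_0$ with $T_r$ real. Finally, the index convention $(R_k^{(\mathrm{lin})})_{\theta,j}=1$ iff $j=(\theta-k)\bmod N_\theta$ must be used consistently, so that $R_j^{(\mathrm{lin})}e_0=e_j$. The coefficients are also unique, since the $R_k$ are linearly independent.
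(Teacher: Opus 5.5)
Your proof is correct and follows essentially the same route as the paper. Both restrict to a fixed radius, take the image of the angular delta as the kernel $c[r]$, and use equivariance to show every column of $T_r$ is a cyclic shift of it; the paper does this inductively via $t^{(r)}_{\theta+1}=R_1 t^{(r)}_\theta$, while you do it in one step via $e_j=R_j^{(\mathrm{lin})}e_0$, and both then conclude $T_r$ is circulant and rewrite it as $\sum_k c_k[r]\,R_k$. Your extra details (the index check, reality and uniqueness of the coefficients, and the identification with $A\otimes K$) go beyond what the paper states explicitly, and they are sound.
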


\begin{proof}
Fix a radius $r$ and restrict $\mathcal{T}$ to the angular space
$\mathbb{R}^{N_\theta}$ at that radius. Under the independence assumption,
this restriction defines a linear operator $T_r:\mathbb{R}^{N_\theta}\to
\mathbb{R}^{N_\theta}$.

Rotation-equivariance implies that $T_r$ commutes with all circular shift
operators $R_k$:
\[
T_r R_k = R_k T_r \qquad \forall k\in\mathbb{Z}_{N_\theta}.
\]
Since the shifts are generated by the one-step shift $R_1$, it is sufficient
to require $T_r R_1 = R_1 T_r$.

Let $\{e_\theta\}$ denote the canonical basis of $\mathbb{R}^{N_\theta}$ and write
$t^{(r)}_\theta := T_r e_\theta$. Then
\[
t^{(r)}_{\theta+1}
=
T_r e_{\theta+1}
=
T_r (R_1 e_\theta)
=
R_1 (T_r e_\theta)
=
R_1 t^{(r)}_\theta.
\]
Thus all columns of $T_r$ are obtained by successive circular shifts of the first
column $t^{(r)}_0$. It follows that the matrix of $T_r$ is circulant, with entries
depending only on the index difference modulo $N_\theta$.

Therefore $T_r$ represents circular convolution with some kernel
$c[r]=\{c_k[r]\}_{k=0}^{N_\theta-1}$:
\[
(T_r a)[\theta]
=
\sum_{k=0}^{N_\theta-1} c_k[r]\,
a[(\theta-k)\bmod N_\theta].
\]
Rewriting this expression in operator form yields
\[
T_r = \sum_{k=0}^{N_\theta-1} c_k[r]\,R_k.
\]
Since the argument holds independently for each radius $r$, the full operator
$\mathcal{T}$ admits the stated rotor expansion.
\end{proof}

\paragraph{Interpretation.}
Rotation equivariance imposes a strong algebraic constraint: any linear operator
that commutes with the cyclic group action must belong to the commutative algebra
generated by the rotors $\{R_k\}$. Equivariance therefore forces convolution, and
the rotor basis provides a complete and minimal parametrization of all such
operators.

\section{Chiara's Theorem: Polar Self-Adjointness and Real Spectra}

Up to this point, we have analyzed specific classes of operators, such as rotors
and circulant operators, by explicitly constructing their action and studying their
spectral properties. While these examples already suggest a common underlying
structure, the results have been presented through individual cases and concrete
representations.

The following theorem provides a unifying perspective. Rather than introducing
new operators, it formalizes the shared spectral structure implicit in all the
previous constructions, allowing the analysis to move from explicit operator
definitions to a more abstract and general framework.

Recall that $A\in\mathbb{R}^{N_r\times N_\theta}$ is polar self-adjoint if
$A^{T_p}=A$, so that \ $A[r,\theta]=A[r,(-\theta)\bmod N_\theta]$.

\begin{theorem}[Chiara's Theorem]\label{theorem:chiara_theorem}
Let $A\in\mathbb{R}^{N_r\times N_\theta}$ satisfy $A^{T_p}=A$. Then, for each
radius $r$, the associated circulant operator is diagonalized by the DFT:
\[
C_{a_r}=F^{-1}\Lambda_r F,
\]
where $\Lambda_r=\mathrm{diag}(\hat a_r)$ is real-valued.
\end{theorem}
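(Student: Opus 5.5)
The plan has two parts. The diagonalization itself comes for free from the earlier proposition on Fourier diagonalization of circulant matrices. The real content is showing that the eigenvalues $\hat a_r[m]$ are real.

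\textbf{Step 1 (diagonalization).} Fix a radius $r$ and set $a_r = A[r,:]$. The circulant matrix $C_{a_r}$ represents the polar product on that slice. Applying \eqref{eq:circulant_diag} directly gives
\[
C_{a_r}=F^{-1}\,\mathrm{diag}(\hat a_r)\,F, \qquad \Lambda_r=\mathrm{diag}(\hat a_r).
\]
This holds for every real kernel and does not use self-adjointness, so only the reality of $\Lambda_r$ remains.

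\textbf{Step 2 (reality of the spectrum).} This is where the hypothesis $A^{T_p}=A$, i.e.\ $a_r[\theta]=a_r[(-\theta)\bmod N_\theta]$, is used. I would compute the conjugate of each coefficient directly. Since $a_r$ is real,
\[
\overline{\hat a_r[m]}=\sum_{\theta} a_r[\theta]\,e^{+2\pi i m\theta/N_\theta}.
\]
Reindex with $\theta' = (-\theta)\bmod N_\theta$, which is a bijection of $\mathbb{Z}_{N_\theta}$. The exponential becomes $e^{-2\pi i m\theta'/N_\theta}$, because it depends only on $\theta$ modulo $N_\theta$. The symmetry then lets us replace $a_r[-\theta']$ by $a_r[\theta']$. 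The sum therefore equals $\hat a_r[m]$, so $\hat a_r[m]\in\mathbb{R}$ for all $m$.

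An equivalent route is to pair the terms $\theta$ and $-\theta$. This gives the explicit cosine form
\[
\hat a_r[m]=\sum_\theta a_r[\theta]\cos(2\pi m\theta/N_\theta),
\]
which is manifestly real. I may record it as a remark.

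\textbf{Alternative and main obstacle.} A more structural route is to note that the symmetry condition makes $C_{a_r}$ a real symmetric matrix: $(C_{a_r})_{ij}=a_{(i-j)}=a_{(j-i)}=(C_{a_r})_{ji}$. It is then Hermitian, and since $F$ is unitary, $\Lambda_r=F C_{a_r}F^{-1}$ is a Hermitian diagonal matrix and hence real. This ties the result to the classical spectral theorem, and I would mention it as a second proof.

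The only delicate point is bookkeeping of normalization and sign conventions. The DFT in \eqref{eq:circulant_diag} carries a $1/\sqrt{N}$ factor, and the eigenvalue computation in the earlier proof used $e^{+2\pi i mn/N}$. So the diagonal entries should be identified with the unnormalized DFT $\hat a_r[m]$ up to these conventions, possibly as $\hat a_r[-m]$. Under the symmetry hypothesis this ambiguity is harmless, because $\hat a_r[-m]=\overline{\hat a_r[m]}=\hat a_r[m]$. The argument is therefore convention-independent, and I would state this explicitly.

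Since each radius is treated independently, the conclusion holds for all $r$.
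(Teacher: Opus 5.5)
Your proposal is correct and follows the paper's proof. Like the paper, you invoke the circulant diagonalization \eqref{eq:circulant_diag} at each radius and then use that a real, even angular profile has a real DFT; you justify that last step explicitly by the reindexing $\theta\mapsto(-\theta)\bmod N_\theta$, whereas the paper only asserts it. Your remarks on the $1/\sqrt{N}$ normalization and the $\hat a_r[m]$ versus $\hat a_r[-m]$ convention in the earlier proposition are accurate, and as you say they do not affect the reality conclusion, since $\hat a_r[-m]=\hat a_r[m]$ under the symmetry hypothesis.
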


\begin{proof}
For each $r$, $C_{a_r}$ is circulant, hence diagonalized by $F$ with diagonal
$\mathrm{diag}(\hat a_r)$. Polar self-adjointness implies $a_r$ is real and even,
so $\hat a_r$ is real-valued. Therefore $\Lambda_r$ is real.
\end{proof}

Although the result may appear as a natural extension of the properties previously
established for rotors and circulant operators, Theorem~\ref{theorem:chiara_theorem} plays a distinct role.
Rather than introducing a new class of operators, it provides a unifying structural
statement that subsumes all previous cases into a single spectral framework.
From this point on, we will rely on this characterization without further reference
to individual constructions.
The theorem is formulated in abstract operator-theoretic terms but its
assumptions naturally arise in a wide range of practical settings, including
convolutional filtering, correlation operators, and translationally invariant
physical systems. In these cases, the theorem provides a rigorous justification
for the ubiquitous use of Fourier-based representations.

\section{Elisa's Theorem and Simultaneous Diagonalization}

By Elisa's Theorem (Chapter~2), the polar product is commutative. For each fixed
radius $r$, this implies that the angular convolution operators induced by polar
kernels commute:
\[
C_a^{(r)} C_b^{(r)} = C_b^{(r)} C_a^{(r)}.
\]

Since circulant operators are normal and commute, they admit a common orthonormal
eigenbasis. In the circulant case, this basis is explicitly given by the Fourier
modes. Consequently, the entire commutative algebra of polar convolution
operators is simultaneously diagonalized by the DFT and reduces to pointwise
multiplication in the frequency domain.

\section{Computational Complexity as a Spectral Consequence}

\begin{proposition}[Complexity of the Polar Product]
Naive polar convolution costs $O(N_rN_\theta^2)$, whereas FFT-based computation
costs $O(N_rN_\theta\log N_\theta)$.
\end{proposition}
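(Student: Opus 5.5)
The plan is to count arithmetic operations separately for the two algorithms, using the radius-wise decoupling of the polar product. By definition, $(A\otimes B)[r,\theta]$ depends only on the rows $A[r,:]$ and $B[r,:]$. The total cost is therefore $N_r$ times the cost of a single circular convolution of length $N_\theta$, plus nothing else. This reduces both claims to a one-ring count, which is then multiplied by $N_r$.

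For the naive bound, I evaluate the defining sum directly:
\[
(A\otimes B)[r,\theta]=\sum_{k=0}^{N_\theta-1}A[r,k]\,B[r,(\theta-k)\bmod N_\theta].
\]
Each output entry needs $N_\theta$ multiplications and $N_\theta-1$ additions, and the modular index costs $O(1)$. There are $N_\theta$ outputs per ring, so one ring costs $\Theta(N_\theta^2)$, and the total is $O(N_rN_\theta^2)$.

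For the FFT bound, I invoke the convolution theorem from the section on Fourier diagonalization, $\widehat{A\otimes B}[r,m]=\hat A[r,m]\hat B[r,m]$, which follows from the diagonalization $C_a=F^{-1}\mathrm{diag}(\hat a)F$. The algorithm per ring has three steps:
\begin{enumerate}
\item compute $\hat a_r=Fa_r$ and $\hat b_r=Fb_r$;
\item form the $N_\theta$ pointwise products $\hat a_r[m]\hat b_r[m]$, at cost $O(N_\theta)$;
\item apply $F^{-1}=F^\ast$ to the product, which costs the same as a forward transform up to conjugation and a $1/N_\theta$ scaling.
\end{enumerate}
Correctness follows because $F$ is invertible. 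The cost is therefore three DFTs plus $O(N_\theta)$ work. Summing over rings gives $O(N_r N_\theta\log N_\theta)$, provided each length-$N_\theta$ DFT costs $O(N_\theta\log N_\theta)$.

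The main obstacle is justifying that $O(N_\theta\log N_\theta)$ DFT cost, since the paper has not yet stated it. My first approach is to assume $N_\theta=2^p$ and use the radix-2 Cooley--Tukey splitting into even and odd indices, which gives the recurrence $T(N)=2T(N/2)+O(N)$ and hence $T(N)=O(N\log N)$. For general $N_\theta$, I would cite the standard fact that FFT algorithms achieve $O(N\log N)$ for every length: mixed-radix Cooley--Tukey handles smooth lengths, and Bluestein's chirp-$z$ reduction handles arbitrary lengths by embedding into a power-of-two convolution. Alternatively, I would state the proposition under the mild assumption that an $O(N\log N)$ FFT is available for the chosen $N_\theta$. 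I would also remark that for real inputs one can use real FFTs and, in learning settings, keep kernels stored in the spectral domain. This improves constants but not the asymptotic order.
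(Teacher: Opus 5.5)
Your proposal is correct and follows the same route as the paper's proof. Both reduce to a single ring by radius-wise independence, count $O(N_\theta^2)$ for the direct sum, count forward/inverse transforms plus $O(N_\theta)$ pointwise products for the spectral route, and multiply by $N_r$. The one addition is that you justify the $O(N_\theta\log N_\theta)$ cost of each transform (radix-2 Cooley--Tukey recurrence, and Bluestein for general $N_\theta$), which the paper simply takes as a known fact.
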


\begin{proof}
Per radius, naive convolution costs $O(N_\theta^2)$. FFT diagonalization costs
$O(N_\theta\log N_\theta)$ for the forward/inverse transforms plus $O(N_\theta)$
for pointwise multiplication, hence $O(N_\theta\log N_\theta)$. Multiply by $N_r$.
\end{proof}

\section{Summary}
Polar convolution operators are diagonalized (radius-wise) by the DFT, rotors act
as frequency-dependent phase shifts, and rotation equivariance forces operators
to lie in the commutative algebra generated by the rotors. FFT-based computation
is therefore the canonical spectral realization of polar linear algebra.
At this point, the algebraic and spectral structure of polar operators is
fully characterized. In the following sections, we show how this structure
translates into efficient implementations and practical architectures.

\begin{figure}
\centering
\begin{tikzpicture}[
    node distance=2.6cm,
    every node/.style={draw, rectangle, rounded corners, align=center}
]
\node (elisa) {Elisa's Theorem\\Commutativity};
\node (aurora) [below left=1.8cm and 2.2cm of elisa] {Aurora's Theorem\\Invertibility};
\node (chiara) [below right=1.8cm and 2.2cm of elisa] {Chiara's Theorem\\Real spectrum in the self-adjoint case};

\draw[->] (elisa) -- (aurora);
\draw[->] (elisa) -- (chiara);
\end{tikzpicture}
\caption{Logical structure of three foundational results in polar linear algebra.
Elisa's Theorem provides the commutative spectral framework underlying the theory.
Within this framework, Aurora's Theorem gives the invertibility criterion for a general polar matrix, while Chiara's Theorem specializes to the self-adjoint case and establishes reality of the spectrum.}
\end{figure}
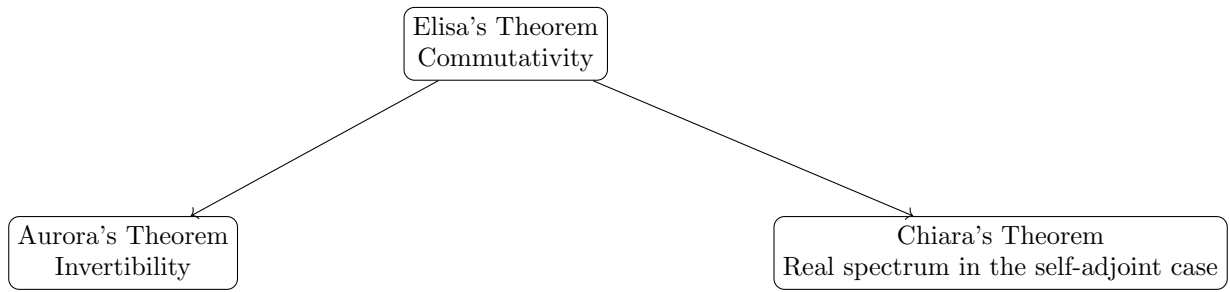

\chapter{Tensorial and Functional Representations}
\label{chap:implementation}

The previous chapters have developed an operator–theoretic and spectral
framework for polar matrices and angular transformations, focusing on their
algebraic structure, spectral properties, and diagonalization by the discrete
Fourier transform. 
The purpose of this chapter is not to introduce new spectral results or to
extend the theoretical analysis developed so far. Instead, its goal is to
recast the previously defined operators, signals, and transformations in a
tensorial and functional language that is compatible with learning-based
architectures. In particular, we move from the perspective of individual
operators acting on single objects to that of tensor-valued functions acting on
batched, multi-channel data.
The constructions introduced here serve as a
preparatory step for the fully spectral neural architectures discussed in later
chapters, where the tensorial formulation becomes essential for both
implementation and parallelization considerations.

\section{The PolarTensor Abstraction}
In order to translate the polar algebra developed in the previous chapters into
a numerical framework, we introduce the abstraction of a \emph{polar tensor}.

Formally, a polar tensor is an object of the form
\[
A[r,\theta_1,\dots,\theta_d]
\in \mathbb{C}^{N_r \times N_{\theta_1} \times \cdots \times N_{\theta_d}},
\]
where:
\begin{itemize}
\item $r$ denotes the radial index;
\item each $\theta_i$ denotes an angular index taking values in the cyclic group
$C_{N_{\theta_i}}$;
\item the radial dimension acts as a collection of independent channels;
\item angular dimensions are periodic and admit a Fourier representation.
\end{itemize}

\subsection{Angular Axes and Spectral Structure}

Let $\mathcal{A} = \{\theta_1,\dots,\theta_d\}$ denote the set of angular axes.
All spectral operations are performed exclusively along these axes, while the
radial axis is treated parametrically.

The discrete Fourier transform (DFT) of a polar tensor is defined as
\[
\widehat{A}[r,m_1,\dots,m_d]
=
\mathcal{F}_{\mathcal{A}}(A)[r,m_1,\dots,m_d],
\]
where $\mathcal{F}_{\mathcal{A}}$ denotes the $d$-dimensional DFT applied along
the angular axes.

This construction is the explicit numerical realization of the spectral
diagonalization proved in Chapter~3.

\subsection{Polar Product}

Given two polar tensors $A$ and $B$ with compatible dimensions, their
\emph{polar product} is implemented spectrally as
\[
A \otimes B
=
\mathcal{F}_{\mathcal{A}}^{-1}
\bigl(
\widehat{A}\;\widehat{B}
\bigr),
\]
where the multiplication in the Fourier domain is pointwise.
This formulation directly reflects Elisa's Theorem and the diagonalization
results of Chapter~3.

\subsection{Invertibility and Pseudo-Inverse}

By Aurora's Theorem, a polar tensor $A$ is invertible under the polar product if
and only if its Fourier coefficients are nowhere vanishing:
\[
\widehat{A}[r,m_1,\dots,m_d] \neq 0
\quad \forall r,\ (m_1,\dots,m_d).
\]

In numerical practice, invertibility is tested spectrally via thresholding.
When exact invertibility fails, a \emph{spectral pseudo-inverse} may be defined
by regularized inversion of the Fourier coefficients:
\[
\widehat{A}^{\dagger}
=
\frac{\widehat{A}^{\ast}}
{|\widehat{A}|^2 + \varepsilon},
\]
where $\varepsilon > 0$ is a small regularization parameter.
This construction parallels standard Tikhonov regularization and will play a
central role in learning-based applications discussed in Chapter~5.

From a learning perspective, the regularized spectral pseudo-inverse provides a
useful interpretation of the behavior observed during training in Chapter \ref{chapter:mnist_training} . Spectral modes
associated with large Fourier coefficients are effectively inverted and
preserved, while modes with small magnitude are automatically damped rather than
amplified. This mechanism explains the stability of the optimization process and
the absence of numerical instabilities, even when some spectral components are
close to singular. In the self-adjoint setting, where the spectrum is real-valued,
this effect becomes purely amplitude-based, further reinforcing the role of
energy-selective filtering in the learned representation.

\subsection{Rotors as Special Polar Tensors}

Angular rotors introduced in Chapter~2 admit a natural interpretation within
the polar tensor framework.

A \emph{rotor} is a polar tensor whose action corresponds to a discrete angular
shift along one or more angular axes. For a shift vector
$\mathbf{k} = (k_1,\dots,k_d)$, the associated rotor $R_{\mathbf{k}}$ satisfies
\[
(R_{\mathbf{k}} A)[r,\theta_1,\dots,\theta_d]
=
A[r,\theta_1-k_1,\dots,\theta_d-k_d].
\]

Spectrally, rotors are diagonal operators. In the Fourier basis, their action
reduces to a pointwise multiplication by a phase factor:
\[
(\widehat{R_{\mathbf{k}}}\,\hat A)[r,m_1,\dots,m_d]
=
e^{-i2\pi\left(
\frac{m_1 k_1}{N_{\theta_1}}+\cdots+\frac{m_d k_d}{N_{\theta_d}}
\right)}
\hat A[r,m_1,\dots,m_d].
\]

where each angular coordinate $\theta_j$ is discretized over
$N_{\theta_j}$ samples, with $\theta_j\in\mathbb{Z}_{N_{\theta_j}}$.

Thus, rotors appear as distinguished polar tensors whose spectra encode angular
translations. This viewpoint unifies the operator-based and kernel-based
interpretations of rotations within a single computational object.

In the present work we mostly consider the case $d=1$, corresponding to a single
angular coordinate $\theta\in\mathbb{Z}_{N_\theta}$, but the formulation extends
naturally to multiple angular dimensions.

\section{Python implementation}

The following reference implementation illustrates how the algebraic
constructions developed in the previous chapters translate directly into
numerical operations.
The code is intentionally minimal and mirrors the theoretical structure
one-to-one.

\begin{lstlisting}[style=polarcode, caption={PolarTensor abstraction for numerical polar algebra.}, label={lst:polartensor}]
import numpy as np

class PolarTensor:
    """
    Polar tensor with one radial axis and one or more angular axes.
    Angular axes are assumed to be periodic.
    """

    def __init__(self, data, angular_axes):
        self.data = np.asarray(data)
        self.angular_axes = tuple(angular_axes)
        self.radial_axis = 0

    def __repr__(self):
        return (
            f"PolarTensor(shape={self.data.shape}, "
            f"angular_axes={self.angular_axes}, "
            f"data=\n{self.data})"
        )

    def fft(self):
        """
        Discrete Fourier transform along all angular axes.
        """
        spectrum = np.fft.fftn(self.data, axes=self.angular_axes)
        return PolarTensor(spectrum, self.angular_axes)

    def ifft(self):
        """
        Inverse discrete Fourier transform along all angular axes.
        """
        spatial = np.fft.ifftn(self.data, axes=self.angular_axes)
        return PolarTensor(spatial, self.angular_axes)

    def polar_convolve(self, other):
        """
        Polar product implemented via spectral multiplication.
        """
        A_hat = self.fft()
        B_hat = other.fft()
        C_hat = A_hat.data * B_hat.data
        return PolarTensor(C_hat, self.angular_axes).ifft()

    def is_invertible(self, eps=1e-12):
        """
        Spectral invertibility test (Aurora's Theorem).
        """
        spectrum = self.fft().data
        return np.all(np.abs(spectrum) > eps)

    def pseudo_inverse(self, eps=1e-6):
        """
        Spectral pseudo-inverse via regularized reciprocal.
        """
        spectrum = self.fft().data
        inv_spec = np.conj(spectrum) / (np.abs(spectrum)**2 + eps)
        return PolarTensor(inv_spec, self.angular_axes).ifft()

    def apply_rotor(self, shifts):
        """
        Applies an angular shift (rotor) to the tensor.
        """
        shifted = self.data
        for ax, k in zip(self.angular_axes, shifts):
            shifted = np.roll(shifted, shift=k, axis=ax)
        return PolarTensor(shifted, self.angular_axes)
\end{lstlisting}

Although written in Python/NumPy for clarity, the implementation is fully
compatible with multidimensional FFT backends and parallel hardware
architectures, as discussed in Chapter~6.

\begin{lstlisting}[style=polarcode, caption={Rotor as a special polar tensor.}, label={lst:rotor}]
    @staticmethod
    def rotor(shape, angular_axes, shifts):
        """
        Constructs a rotor as a polar tensor.
        """
        data = np.zeros(shape, dtype=np.float64)

        idx = [slice(None)] * len(shape)
        for ax, k in zip(angular_axes, shifts):
            idx[ax] = k % shape[ax]

        data[tuple(idx)] = 1.0
        return PolarTensor(data, angular_axes)
\end{lstlisting}

\subsection{Outlook}

The polar tensor abstraction introduced here provides a minimal but complete
numerical counterpart to the polar algebra developed in the previous chapters.
Its design intentionally avoids restricting the number of angular dimensions,
thereby preparing the ground for applications to rotation-equivariant learning
and parallel spectral implementations on modern hardware
architectures.

\section{Dense Layers as Polar Operators}
\label{sec:dense_polar}

As a first concrete application of the polar tensor framework, we consider a
standard dense (fully connected) linear layer, one of the most common building
blocks in classical neural networks.

In its simplest form, a dense layer acts on a vector $x \in \mathbb{R}^N$ as
\[
y = Wx,
\]
where $W \in \mathbb{R}^{N \times N}$ is a learned weight matrix.

\subsection{Dense Layers in the Polar Domain}

When the input signal is represented as a polar tensor
\[
A[r,\theta_1,\dots,\theta_d],
\]
the dense layer can be interpreted as a linear operator acting independently
on each angular frequency.

Using the spectral representation introduced in Chapter~3, the action of a
dense layer becomes diagonal in the angular Fourier domain:
\[
\widehat{Y}[r,m_1,\dots,m_d]
=
W_r(m_1,\dots,m_d)\;
\widehat{A}[r,m_1,\dots,m_d],
\]
where $W_r$ denotes a set of learnable spectral weights, possibly dependent on
the radial index $r$.

Thus, a dense layer corresponds to a polar operator that performs
\emph{pointwise multiplication in the angular Fourier domain}, followed by an
inverse Fourier transform. No mixing between distinct angular frequencies
occurs.

\paragraph{Remark: Training in the Spectral Domain.}

In the polar dense layer of Listing~\ref{lst:polar_dense}, the learnable
parameters act directly on the angular Fourier coefficients.
As a result, training naturally takes place in the spectral domain.

This does not require any modification of the standard backpropagation
algorithm.
The discrete Fourier transform is a unitary linear operator, and therefore
preserves inner products, gradients, and optimization dynamics.
From the perspective of learning, the FFT and its inverse are simply fixed
changes of basis.

An important consequence is that each angular frequency defines an independent
optimization direction.
The learning problem decomposes into a collection of orthogonal subproblems,
one per frequency and radial channel, with no cross-frequency coupling.

This observation will be exploited explicitly in
Chapter~5, where training dynamics, stability, and inductive biases of polar
neural architectures are discussed in detail.

\subsection{Implementation as a Polar Operator}
From a numerical perspective, a dense angular layer may be implemented by:
\begin{enumerate}
\item transforming the polar tensor to the angular Fourier domain;
\item multiplying each frequency component by a learned weight;
\item transforming back to the spatial angular domain.
\end{enumerate}

This procedure provides an intuitive interpretation of the operation in terms
of spectral diagonalization, as established in Chapter~3, and is naturally
supported by the \texttt{PolarTensor} abstraction introduced in this chapter.
However, it is important to stress that the final inverse transform is not
strictly required. Once a convenient orthogonal basis has been identified, the
entire computation may be carried out directly in the Fourier domain. This
fully spectral formulation avoids
unnecessary transformations and leads to a more compact and efficient
implementation.

\subsection{Benefits of the Polar Formulation}

Interpreting dense layers as polar operators yields several advantages:
\begin{itemize}
\item \textbf{Structured parameterization}: weights act independently on each
angular frequency, reducing unnecessary coupling;
\item \textbf{Rotation equivariance}: the operator commutes with all angular
rotors by construction;
\item \textbf{Parallelism}: all frequency components are independent and can be
processed in parallel;
\item \textbf{Spectral interpretability}: learned weights admit a direct
frequency-domain interpretation.
\end{itemize}

These properties make polar dense layers a natural building block for
rotation-aware and rotation-equivariant neural architectures, as discussed in
the next chapter.

\begin{lstlisting}[style=polarcode,
caption={Dense layer implemented as a polar operator.},
label={lst:polar_dense}]
class PolarDense:
    """
    Dense linear layer acting spectrally on angular dimensions.
    """

    def __init__(self, weight):
        """
        Parameters
        ----------
        weight : ndarray
            Spectral weights with shape compatible with the angular spectrum.
        """
        self.weight = weight

    def __call__(self, A):
        """
        Applies the dense layer to a PolarTensor A.
        """
        A_hat = A.fft()
        Y_hat = A_hat.data * self.weight
        return PolarTensor(Y_hat, A.angular_axes).ifft()
\end{lstlisting}

\section{Complexity of Classical Dense Layers vs.\ Polar Dense Layers}
\label{sec:dense_comparison}

We now compare the computational structure of a classical dense layer with its
polar counterpart, focusing on both algebraic form and computational
complexity.

A classical dense layer acting on a vector $x \in \mathbb{R}^{N}$ is defined as
\[
y = Wx,
\]
where $W \in \mathbb{R}^{N \times N}$ is a fully populated weight matrix.

The computation of $y$ requires:
\begin{itemize}
\item $N^2$ multiplications,
\item $N(N-1)$ additions.
\end{itemize}
Hence, the time complexity of a single dense layer is
\[
T_{\text{dense}} = O(N^2).
\]

This cost reflects the fact that every output component depends on every input
component, with no exploitable structure.

Consider now a polar tensor
\[
A[r,\theta_1,\dots,\theta_d],
\]
and let $N_\theta = \prod_{i=1}^d N_{\theta_i}$ denote the total number of angular
samples.

A polar dense layer acts diagonally in the angular Fourier domain:
\[
\widehat{Y}[r,m_1,\dots,m_d]
=
W_r(m_1,\dots,m_d)\;
\widehat{A}[r,m_1,\dots,m_d],
\]
where $W_r$ denotes learned spectral weights.

The computation consists of:
\begin{enumerate}
\item one $d$-dimensional FFT along angular axes;
\item $N_\theta$ pointwise multiplications;
\item one inverse $d$-dimensional FFT.
\end{enumerate}

Using the FFT complexity estimate, the total cost per radial channel is
\[
T_{\text{polar}} = O\!\left(N_\theta \log N_\theta\right).
\]

Since all radial channels are independent, the full cost scales linearly with
the number of radial samples $N_r$.

For sufficiently large angular resolution $N_\theta$, the polar formulation
yields a substantial reduction in computational complexity.
Moreover, the polar dense layer exposes fine-grained parallelism across angular
frequencies, which is not present in the classical dense formulation.

The reduction from quadratic to quasi-linear complexity is not obtained through
approximation or sparsification, but follows directly from the algebraic
structure imposed by angular periodicity.
The FFT-based implementation is therefore a consequence of the polar algebra,
rather than an external optimization.

\chapter{Polar Neural Operators on MNIST}\label{chapter:mnist_training}

\section{Chapter Scope and Validation Strategy}

The purpose of this chapter is to provide a concrete validation of the
theoretical framework developed in the previous chapters.
Rather than introducing a new state-of-the-art architecture or proposing a
general replacement for existing deep learning models, the goal here is more
foundational and demonstrative.

Specifically, this chapter aims to show that the proposed polar--spectral
algebraic formulation:
\begin{itemize}
  \item can be translated into an explicit computational pipeline,
  \item can be implemented using existing deep learning frameworks without
        requiring custom runtimes or non-standard tooling,
  \item and can be trained end-to-end on a standard benchmark, achieving
        strong performance despite the simplicity of the setting.
\end{itemize}

The MNIST dataset is deliberately chosen as a simple and well-understood test
case.
Its role in this work is not to assess ultimate performance, but to act as a
minimal and controlled environment in which the feasibility of the proposed
operators can be evaluated.
In this sense, MNIST serves as a \emph{didactic} and \emph{vintage} benchmark,
allowing the focus to remain on the structure of the operators rather than on
data complexity.

It is important to emphasize what this chapter does \emph{not} attempt to do.
The objective is not to rewrite the entirety of deep learning models or
convolutional architectures within a polar or spectral paradigm.
Nor is the intent to claim that all operators should be expressed in polar
coordinates or that spectral formulations universally dominate spatial ones.
Instead, the contribution is limited and precise:
\begin{enumerate}
  \item the introduction of a new algebraic viewpoint for representing signals
        and operators,
  \item a formal mathematical description of the problem in polar and spectral
        terms,
  \item a concrete hypothesis for how common operators can be realized within
        this framework,
  \item and a proof-of-concept implementation validating these ideas on a
        simple classification task.
\end{enumerate}

In this chapter, we present a sequence of numerical experiments on MNIST designed to progressively validate and refine the proposed polar neural operator framework.
Rather than introducing a single final architecture, we deliberately follow an incremental approach, where each experiment addresses a specific conceptual question and motivates the subsequent design choices.

The experiments evolve from a minimal polar operator trained largely in the spatial domain, to a fully spectral formulation, and finally to a self-adjoint-inspired architecture explicitly biased toward real spectra and stable dynamics.
This progression reflects the development of the underlying research ideas and highlights the role of spectral structure in both expressivity and optimization.

All experiments presented in this chapter are implemented in a unified codebase,
which is publicly available and maintained at
\url{https://github.com/giovanniguasti/polar-algebra}
and archived on Zenodo.

\subsection*{Acknowledgment on Tool Support}

For the sake of transparency, the author notes that an indispensable contribution
to the debugging and stabilization of the fully spectral model was provided
through interactive assistance from ChatGPT.
In particular, the identification of silent failure modes, diagnostic strategies,
and iterative architectural refinements benefited significantly from this support.

It should also be emphasized that early versions of the fully spectral model
described in this work did not converge or exhibited severe accuracy plateaus.
The final results were achieved only after extensive debugging and systematic
analysis of spectral energy flow.

\section{FFT-Based Implementation of Polar Operators}

The numerical formulation developed in Chapter~4 shows that polar convolution
operators admit a particularly convenient representation in the Fourier domain,
where they become diagonal (frequency-wise decoupled).
This observation primarily concerns the \emph{implementation} of polar
operators, rather than requiring new learning rules:
standard gradient-based optimization remains unchanged, but operator evaluation
can exploit FFT-based diagonalization.

Recall from Chapter~3 that, for a fixed radius $r$, the polar convolution
\[
(A \otimes K)[r,\theta]
\]
becomes, in the angular Fourier domain,
\[
\widehat{(A \otimes K)}[r,m]
=
\hat A[r,m]\;\hat K[r,m],
\]
where the product is pointwise over angular frequencies.
As a consequence, a polar linear layer can be interpreted as a collection of
independent linear transformations acting on orthogonal spectral components.
The learnable parameters of the layer are the spectral coefficients
$\hat K[r,m]$, rather than spatial-domain kernel values.

This choice also yields a significant reduction in the number of learnable parameters.
In the spatial domain, a convolutional kernel of size $k_r \times k_\theta$ requires
$C_{\mathrm{in}} C_{\mathrm{out}} k_r k_\theta$ parameters.
In contrast, the spectral formulation learns only
$C_{\mathrm{in}} C_{\mathrm{out}} R (2M+1)$ coefficients, where $R$ and $M$ denote the number of retained radial and angular Fourier modes.

\section{Advantages of Spectral Evaluation}

Evaluating polar operators in the spectral domain yields several advantages:

\paragraph{Implicit regularization.}
Spectral representations make it natural to constrain or penalize high-frequency
components, encouraging smooth and structured operators.

\paragraph{Computational efficiency.}
FFT-based implementations reduce complexity from quadratic to quasi-linear in
the angular resolution (up to constant factors and truncation choices).

\paragraph{Interpretability.}
Learned parameters correspond to frequency responses, enabling direct inspection
of what the model learns at each radius and angular scale.

\paragraph{Model size}
Since $R$ and $M$ are typically small and independent of the spatial resolution, the spectral operator achieves global angular support with substantially fewer parameters.
This parameter efficiency becomes increasingly pronounced as the resolution grows or when long-range angular interactions are required.
The self-adjoint-inspired constraints further reduce the effective degrees of freedom, acting as an additional inductive bias.

\paragraph{Validity of Gradient-Based Training}

The discrete Fourier transform is a linear, invertible, and unitary operator.
As such, it preserves inner products, norms, and linear structure.
Training a model that includes Fourier transforms is therefore equivalent to
training an isomorphic model in the spatial domain under a change of
coordinates.

More explicitly, let $\mathcal{L}$ be a loss function defined on the network
output.
If the forward pass includes a Fourier transform, the backward pass propagates
gradients through the inverse transform.
Since the transform is linear, differentiation commutes with it, and the
standard backpropagation equations remain valid without modification.

No new learning rules are required: gradients with respect to spectral
parameters are computed exactly as gradients with respect to spatial
parameters, while benefiting from the diagonal structure of the operator.

\paragraph{Stability of Learning in Spectral Models}
\label{sec:learning_stability}

In learning-based architectures, the explicit computation of inverse operators
is rarely required. Nevertheless, the notion of invertibility plays a fundamental
role in the stability of training and in the preservation of information across
layers. Unitary transforms preserve norms, reducing gradient explosion or
vanishing.\footnote{
A unitary transform $U$ satisfies $U^\ast U = I$ and therefore preserves the
$\ell_2$ norm, so that \ $\|Ux\|_2 = \|x\|_2$.
This property holds both in the forward pass and during backpropagation, since
gradients propagate through $U$ and its adjoint $U^\ast$ without additional
scaling.
In deep networks, exploding or vanishing gradients typically arise from the
repeated composition of operators whose norms are respectively larger or
smaller than one.
By contrast, unitary transforms have condition number one and do not introduce
systematic amplification or attenuation, improving numerical stability by
ensuring that changes in gradient magnitude originate from learnable or
nonlinear operators rather than from the representation change itself.
} In particular, the behavior of gradients during backpropagation is
strongly influenced by the spectral properties of the linear operators composing
the model.

In fully spectral architectures, convolutional layers reduce to multiplicative
operations in the frequency domain. If the corresponding spectral symbol vanishes
or becomes excessively small over subsets of the spectrum, entire components of
the signal may be irreversibly suppressed. This leads not only to information
loss in the forward pass, but also to degraded or vanishing gradients during
training.

Aurora’s Theorem provides a structural criterion to prevent such pathological
cases. By characterizing the invertibility of polar convolution operators in
terms of their spectral representation, it ensures that no relevant subspace of
the signal is collapsed by construction. From a learning perspective, this
property translates into improved numerical conditioning and more stable gradient
propagation.

It is therefore important to emphasize that invertibility is not introduced here
as an operational requirement but as a design principle. Infact no explicit inverse is computed during training. Ensuring that spectral operators remain well-posed
across the active frequency range contributes to robustness, smoother
optimization dynamics, and reduced sensitivity to initialization and learning
rate choices.

\paragraph{Orthogonality and Decoupling of Learning Dimensions}
Learnable parameters act multiplicatively on orthogonal spectral components,
reducing unnecessary parameter interactions.

In the spectral representation, each angular frequency corresponds to an
orthogonal basis vector.
As shown in Chapter~3, polar convolution operators act independently on each
frequency component.

This has a direct consequence for training: the learning problem decomposes
into a collection of independent subproblems, one per angular frequency.
There is reduced interference between frequencies, and parameter updates at one
frequency do not directly affect others.

From an optimization perspective, this reduces cross-coupling between
parameters and improves conditioning.
From a modeling perspective, it allows the network to learn frequency-selective
responses in a controlled and interpretable manner. 

\paragraph{Outlook: Spectral Parallelism.}
The decoupling of angular frequencies suggests a form of parallelism that goes
beyond standard data, pipeline, or expert parallelism.
Since each frequency component can be processed independently within FFT-based
operators, polar networks admit a natural \emph{spectral tensor parallelism},
in which disjoint groups of frequencies can be assigned to different
processing units.
This structure will be revisited in the next chapter.

\paragraph{Consequences for Deep Architectures}

In multi-layer polar networks, the spectral structure informs both model design
and implementation.
Frequency-wise decoupling supports:
\begin{itemize}
\item deep stacks of polar linear operators,
\item rotation-aware convolutional architectures,
\item frequency-selective mechanisms,
\item and efficient parallelization across frequencies and radii.
\end{itemize}

These aspects motivate the discussion of hardware implications and parallel
implementations in the next chapter.

\subsection{Staying in the Spectral Domain} \label{sec:staying_spectral}

The previous discussion characterizes polar linear operators through their
Fourier-domain diagonalization.
It is important to clarify that \emph{remaining entirely in the spectral domain}
is possible only under specific architectural conditions.

When a network is composed exclusively of \emph{linear} polar operators that
are simultaneously diagonalized by the same Fourier transform (e.g., sequences
of polar convolution/dense layers without spatial-domain nonlinearities or
normalization), the composition remains diagonal in the spectral domain.
In that case, one forward transform at the input and, if needed, one inverse
transform at the output are sufficient.

Let $\{\mathcal{T}_\ell\}_{\ell=1}^L$ be a sequence of polar linear operators.
In the Fourier domain, each operator acts as
\[
\widehat{\mathcal{T}_\ell A}[r,m]
=
W_\ell[r,m]\;\hat A[r,m],
\]
and their composition satisfies
\[
\hat A_{\mathrm{out}}[r,m]
=
\Bigl(\prod_{\ell=1}^L W_\ell[r,m]\Bigr)\;\hat A_{\mathrm{in}}[r,m].
\]

In practical deep networks, however, normalization and nonlinear activations
are typically defined in the spatial domain.
As a result, implementations alternate between spectral evaluation of linear
operators and spatial-domain nonlinearities, requiring FFT and inverse FFT
within each block.
This is the design choice adopted in the MNIST proof-of-concept presented
later in this chapter, in order to preserve the semantics of standard nonlinear
operators.

\subsection{Activation Operator vs. Spectral Gate Operator}
Before analyzing the empirical results, it is useful to clarify how nonlinearity
is introduced in the proposed framework, and how this differs from conventional
neural architectures.

In conventional convolutional neural networks, nonlinearity is introduced
through pointwise activation functions, such as ReLU or SiLU, applied
independently to each spatial entry:
\[
y_i = \sigma(x_i).
\]
These activations operate locally and modulate the amplitude of individual
features, acting as value-based valves that selectively amplify or suppress
signal magnitudes. Their role is to break global linearity while preserving
spatial locality, enabling the composition of expressive hierarchical models.

In contrast, the polar spectral framework replaces pointwise activations with a
\emph{gate operator} acting along the angular dimension. Rather than operating
on individual values, the gate is defined as a circular convolution:
\[
Y_k = \frac{1}{N_\theta} \sum_m W_m \, X_{k-m},
\]
which introduces a structured, non-local modulation across angular coordinates.
The gate does not act on single entries, but on directional components, coupling
neighboring angles through a smooth, band-limited kernel.
Conceptually, this choice shifts the role of nonlinearity from individual values
to directional structures.
Despite these differences, the two operators occupy an analogous position within
their respective architectures. In both cases, the operator is responsible for
introducing nonlinearity between linear transformations, thereby preventing the
collapse of the model into a purely linear system. However, the nature of the
nonlinearity differs fundamentally: pointwise activations act as valves on
feature values, whereas spectral gates act as valves on directions.

From this perspective, the polar spectral gate can be interpreted as a
direction-selective analogue of classical activation functions. It enables the
model to emphasize or suppress entire angular components, rather than individual
spatial values, introducing an anisotropic and operator-valued form of
nonlinearity. This shift from value-based to direction-based modulation is a
direct consequence of the underlying polar geometry and represents a distinctive
feature of the proposed algebraic framework.

The originality of this approach lies in redefining the concept of activation in
a way that is consistent with operator learning and spectral representations.
Rather than adapting existing pointwise nonlinearities to a new domain, the
polar algebra framework introduces a structurally aligned alternative, tailored
to the geometry and decomposition of the problem itself.

\section{MNIST Proof-of-Concept}

\subsection{Baseline polar operator}

The first experiment serves as a feasibility study.
Here, we investigate whether a neural operator defined in polar coordinates can be successfully trained on MNIST using largely standard components.
The architecture retains a predominantly spatial processing pipeline, while introducing polar indexing and angular structure only at the operator level.

The example uses a model that operates entirely in
polar coordinates and relies on spectral operators inspired by Fourier Neural
Operators.
In particular, spectral evaluation is employed for linear operators, while
nonlinear activations are applied in the spatial polar domain, consistent with
the discussion in Section~\ref{sec:staying_spectral}.
The same framework naturally supports hybrid architectures, where polar and
non-polar operators coexist and are composed within a single pipeline.

This experiment demonstrates that polar representations do not inherently hinder optimization or accuracy.
Despite the minimal use of spectral structure, the model achieves competitive performance, establishing a baseline and validating the use of polar coordinates as a viable representation for neural operators.

While the first experiment confirms feasibility, it does not exploit the natural spectral structure induced by polar coordinates.
In particular, angular periodicity and radial frequency separation suggest a formulation directly in the Fourier domain.

\paragraph{Pipeline and Operator Instantiation}

Let $I(x,y) \in \mathbb{R}^{28 \times 28}$ be an input image.
The first step of the pipeline is a deterministic change of representation
from Cartesian to polar coordinates:
\[
\mathcal{P}: \mathbb{R}^{28 \times 28} \longrightarrow \mathbb{R}^{R \times T},
\qquad
A(r,\theta) = \mathcal{P}[I(x,y)] .
\]
This operator corresponds to the polar resampling discussed in the previous
chapters and is implemented via grid-based interpolation.
Since $\mathcal{P}$ is fixed and computationally expensive, its output is
cached on disk and reused across training epochs.

To compensate for scale differences across radial rings, a radial
equalization operator $\mathcal{E}$ is applied:
\[
A'(r,\theta) = \mathcal{E}[A](r,\theta).
\]

The need for this operation arises from the geometry of polar coordinates.
In a polar discretization with uniform sampling in $(r,\theta)$, each sample
at radius $r$ represents a spatial sector whose area scales proportionally
to $r$.
As a consequence, inner rings aggregate information from a significantly
smaller spatial region than outer rings, leading to systematically lower
variance and energy in the corresponding samples.

Without correction, this imbalance biases both activations and gradients
toward outer radii, particularly in architectures where radial bands are
processed independently, as in the present spectral formulation.
Radial equalization mitigates this effect by normalizing ring-wise statistics
and applying a mild radius-dependent weighting that compensates for the
underlying area distortion.

The weighting function is inspired by the geometric Jacobian of the polar
transform and aims to partially restore scale homogeneity across radii.
Its specific functional form is chosen conservatively to stabilize training
without over-amplifying noise in the innermost rings.

The equalized polar signal is lifted to a higher-dimensional feature space
using a pointwise linear operator:
\[
\mathcal{L}: \mathbb{R}^{1 \times R \times T}
\longrightarrow
\mathbb{R}^{W \times R \times T}.
\]
In practice, $\mathcal{L}$ is implemented as a $1 \times 1$ convolution.
The resulting real-valued features are then interpreted as complex-valued
fields by initializing the imaginary part to zero:
\[
\mathbf{z} = \mathbf{z}_{\Re} + i\,\mathbf{0},
\qquad
\mathbf{z} \in \mathbb{C}^{W \times R \times T}.
\]

\paragraph{Spectral Polar Operator and PolarFNO Blocks}

The core operator is a spectral convolution acting on polar feature maps.
Let $\mathcal{F}$ denote the two-dimensional Fourier transform over the
$(r,\theta)$ grid. A spectral convolution is defined as
\[
\widehat{\mathbf{z}}(k_r,k_\theta) =
\mathcal{F}[\mathbf{z}](k_r,k_\theta),
\]
followed by a mode-wise multiplication
\[
\widehat{\mathbf{y}}(k_r,k_\theta) =
\widehat{W}(k_r,k_\theta)\,
\widehat{\mathbf{z}}(k_r,k_\theta),
\]
where $\widehat{W}$ is a learnable complex-valued tensor defined on a truncated
set of low-frequency modes.
The inverse transform
\[
\mathbf{y} = \mathcal{F}^{-1}[\widehat{\mathbf{y}}]
\]
returns the signal to the polar spatial domain.
Each PolarFNO block combines the spectral operator with:

\begin{itemize}
\item a pointwise ($1\times 1$) convolution for channel mixing,
\item a residual connection,
\item complex-valued normalization, and
\item a magnitude-based nonlinearity (ModReLU).
\end{itemize}
Since normalization and activation are applied in the spatial polar domain,
each block performs a forward and inverse Fourier transform.


\begin{figure}
\centering
\begin{tikzpicture}[
  font=\small,
  node distance=7mm,
  block/.style={draw, rounded corners, align=center, inner sep=6pt, text width=0.82\linewidth},
  arrow/.style={-Latex, thick}
]

\node[block] (ds) {MNIST dataset\\ \footnotesize $28\times 28$ grayscale digits (0--9)};

\node[block, below=of ds] (split) {Deterministic split\\ \footnotesize train/val via fixed seed};

\node[block, below=of split] (polar) {Polar resampling $\mathcal{P}$\\ \footnotesize build $(R,T)$ polar grid and resample with \texttt{grid\_sample}};

\node[block, below=of polar] (cache) {On-disk cache\\ \footnotesize store/load cached polar tensors (\texttt{.pt}) for train/val/test};

\node[block, below=of cache] (loader) {DataLoader\\ \footnotesize batching + shuffling (train)};

\node[block, below=of loader] (model) {PolarFNO model\\ \footnotesize radial equalization + complex PolarFNO blocks + classifier head};

\node[block, below=of model] (train) {Training loop\\ \footnotesize cross-entropy, Adam, gradient clipping};

\node[block, below=of train] (val) {Validation\\ \footnotesize loss/accuracy + best checkpoint selection};

\node[block, below=of val] (test) {Test evaluation\\ \footnotesize final accuracy on held-out test set};

\node[block, below=of test] (art) {Artifacts\\ \footnotesize checkpoints (last/best) + training curves};

\draw[arrow] (ds) -- (split);
\draw[arrow] (split) -- (polar);
\draw[arrow] (polar) -- (cache);
\draw[arrow] (cache) -- (loader);
\draw[arrow] (loader) -- (model);
\draw[arrow] (model) -- (train);
\draw[arrow] (train) -- (val);
\draw[arrow] (val) -- (test);
\draw[arrow] (test) -- (art);

\end{tikzpicture}
\caption{End-to-end MNIST pipeline used in the proof-of-concept implementation:
polar resampling (cached on disk), training/validation loops, and exported artifacts.}
\label{fig:mnist_pipeline_vertical}
\end{figure}
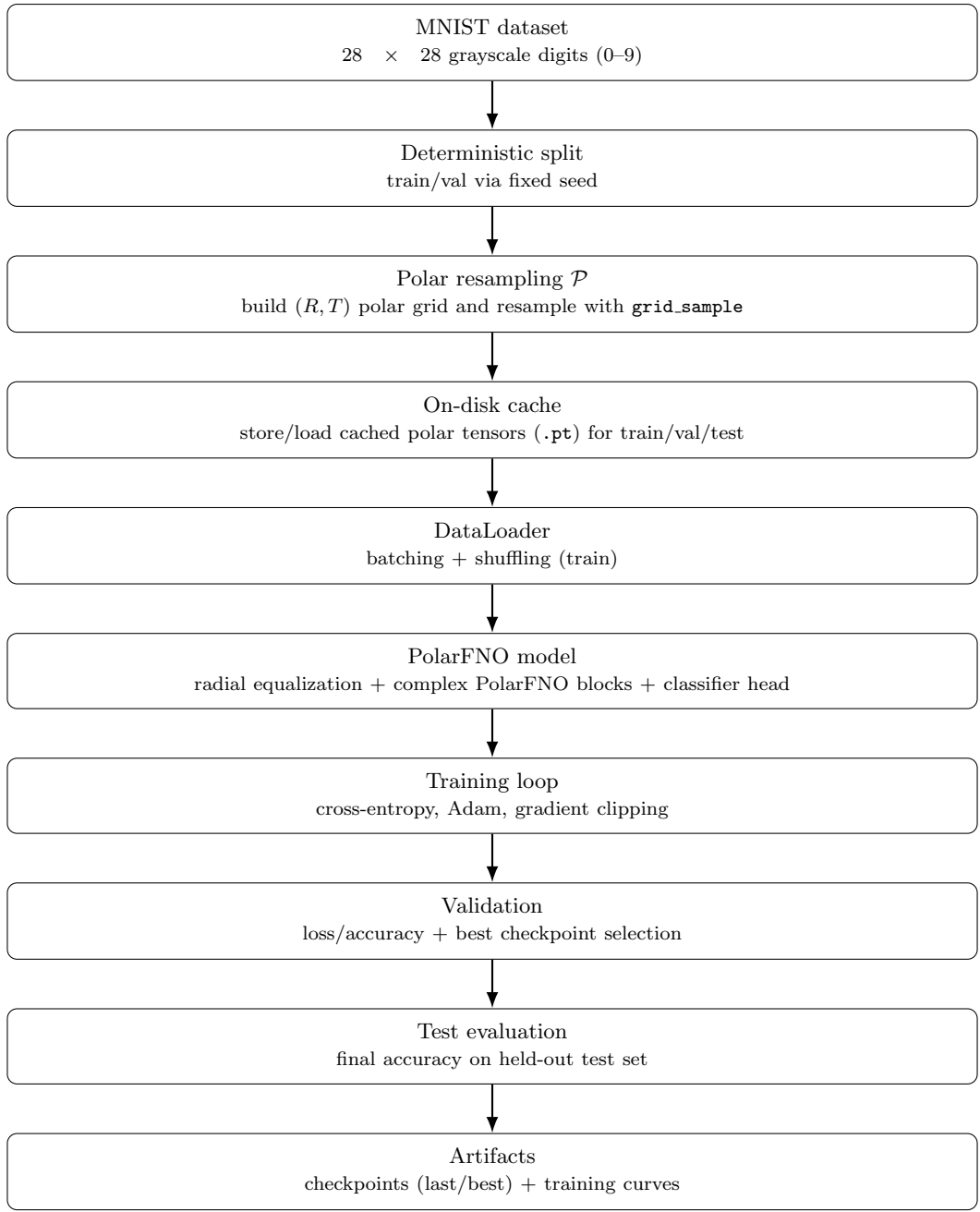


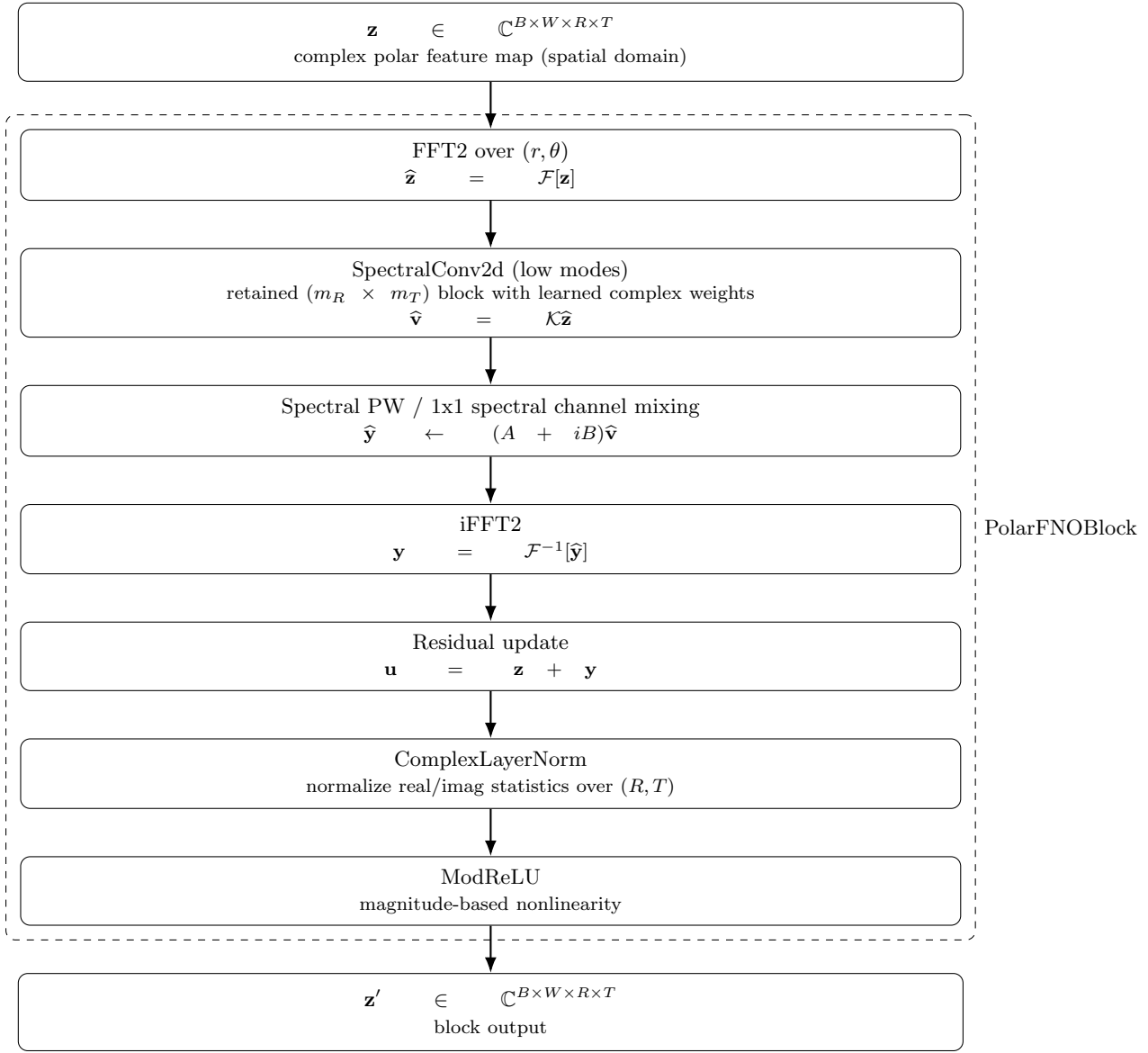
\begin{figure}
\centering
\begin{tikzpicture}[
  font=\small,
  node distance=7mm,
  block/.style={draw, rounded corners, align=center, inner sep=6pt, text width=0.82\linewidth},
  small/.style={draw, rounded corners, align=center, inner sep=5pt, text width=0.82\linewidth},
  arrow/.style={-Latex, thick}
]

\node[block] (zin) {$\mathbf{z}\in\mathbb{C}^{B\times W\times R\times T}$\\ \footnotesize complex polar feature map (spatial domain)};

\node[small, below=of zin] (fft) {FFT2 over $(r,\theta)$\\ \footnotesize $\widehat{\mathbf{z}}=\mathcal{F}[\mathbf{z}]$};

\node[small, below=of fft] (spec) {SpectralConv2d (low modes)\\ \footnotesize retained $(m_R\times m_T)$ block with learned complex weights\\ \footnotesize $\widehat{\mathbf{v}}=\mathcal{K}\widehat{\mathbf{z}}$};

\node[small, below=of spec] (pw) {Spectral PW / 1x1 spectral channel mixing\\ \footnotesize $\widehat{\mathbf{y}} \leftarrow (A+iB)\widehat{\mathbf{v}}$};

\node[small, below=of pw] (ifft) {iFFT2\\ \footnotesize $\mathbf{y}=\mathcal{F}^{-1}[\widehat{\mathbf{y}}]$};

\node[small, below=of ifft] (res) {Residual update\\ \footnotesize $\mathbf{u}=\mathbf{z}+\mathbf{y}$};

\node[small, below=of res] (norm) {ComplexLayerNorm\\ \footnotesize normalize real/imag statistics over $(R,T)$};

\node[small, below=of norm] (act) {ModReLU\\ \footnotesize magnitude-based nonlinearity};

\node[block, below=of act] (zout) {$\mathbf{z}'\in\mathbb{C}^{B\times W\times R\times T}$\\ \footnotesize block output};

\draw[arrow] (zin) -- (fft);
\draw[arrow] (fft) -- (spec);
\draw[arrow] (spec) -- (pw);
\draw[arrow] (pw) -- (ifft);
\draw[arrow] (ifft) -- (res);
\draw[arrow] (res) -- (norm);
\draw[arrow] (norm) -- (act);
\draw[arrow] (act) -- (zout);

\node[draw, dashed, rounded corners, fit=(fft)(spec)(pw)(ifft)(res)(norm)(act),
      inner sep=6pt, label={[font=\small]right:PolarFNOBlock}] {};

\end{tikzpicture}
\caption{PolarFNOBlock used in the MNIST experiment: the spectral operator is applied on retained low-frequency modes. The spectral pointwise path is the frequency-space analogue of a 1x1 convolution, performing channel mixing on the intermediate spectral tensor. Normalization and nonlinearity are applied in the spatial domain, thus requiring one FFT/iFFT pair per block.}
\label{fig:polarfno_block_spectral_pw}
\end{figure}

\paragraph{Operator-level excerpts.}
For readability, this chapter reports only the core operators used in the
MNIST proof-of-concept (polar resampling, radial equalization, and spectral
convolution).
The reference implementation is publicly available\footnote{\url{https://github.com/giovanniguasti/polar-algebra}.}

\paragraph{Model compactness.}
We observed that the proposed polar--spectral architecture remains competitive
even at reduced channel width.
For instance, using $W=32$ and only $3$ PolarFNO blocks, the model reaches
$95.0\%$ validation accuracy and $94.9\%$ test accuracy on MNIST after two training epochs.
This indicates that, for this benchmark, a compact set of low-frequency polar
modes combined with spectral channel mixing is sufficient to achieve strong
performance, supporting the practical viability and parameter efficiency of the
proposed operator formulation.

\begin{table}[t]
\centering
\begin{tabular}{c c c c c c}
\hline
$W$ & blocks & $m_R$ & $m_T$ & Val Acc (\%) & Test Acc (\%) \\
\hline
32 & 3 & 10 & 16 & 94.4 & 94.5 \\
32 & 2 & 12 & 20 & 92.4 & 92.7 \\
32 & 3 & 10 & 20 & 95.0 & 94.9 \\
\hline
\end{tabular}
\caption{Effect of channel width, depth, and retained spectral modes on MNIST.
Depth has a stronger impact than the number of retained modes, while moderately
increasing $m_T$ improves performance when depth is sufficient.}
\label{tab:mnist_ablation}
\end{table}

\paragraph{Training Curves and Performance}

Figure~\ref{fig:mnist_training_curves} reports training and validation curves
(loss and accuracy), illustrating stable convergence and strong performance
for this vintage benchmark.

\begin{figure}
    \centering
    \includegraphics[width=1\linewidth]{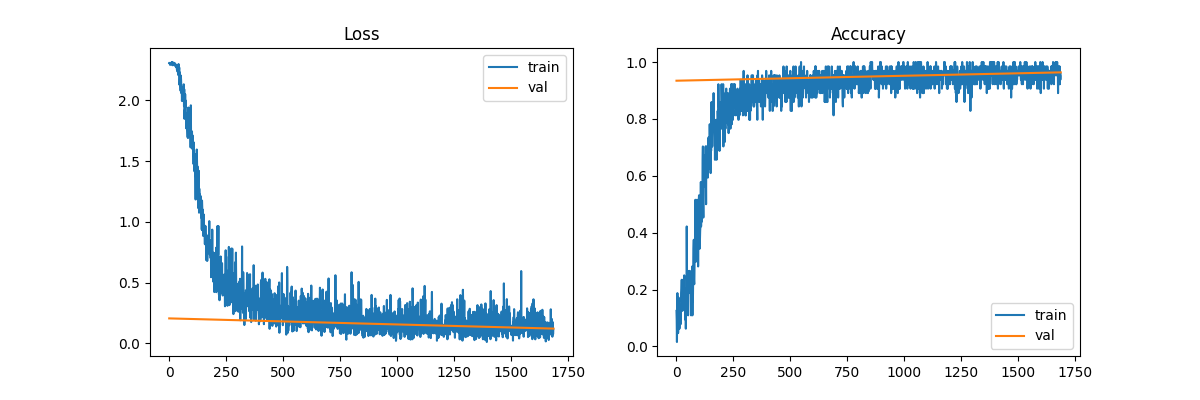}
    \caption{Training and validation curves (loss and accuracy) for the MNIST PolarFNO proof-of-concept. The horizontal axis tracks training progression, and the vertical axes show the corresponding loss and accuracy values.}
    \label{fig:mnist_training_curves}
\end{figure}

Having established that polar operators can be trained effectively in a largely
spatial pipeline, we next move to a fully spectral formulation, where both the
operator and the nonlinearity are expressed entirely in the frequency domain.

\subsection{Fully spectral polar architecture}
In the second experiment, we reformulate the polar operator entirely in the spectral domain.
Both radial and angular dimensions are processed using Fourier representations, allowing convolution, channel mixing, and gating to be expressed directly in frequency space.
The main objective of this experiment was to eliminate repeated transitions between
the spatial domain and the frequency domain inside the network, achieving a
fully spectral pipeline of the form
\[
\text{FFT} \;\rightarrow\; \text{spectral blocks} \;\rightarrow\; \text{spectral head},
\]
without intermediate inverse FFTs (IFFT).

This design aims to:
\begin{itemize}
  \item reduce computational overhead caused by repeated FFT/IFFT operations;
  \item maintain mathematical consistency with a fully spectral interpretation of the model;
  \item investigate whether low-frequency Fourier coefficients alone can support accurate learning.
\end{itemize}

This fully spectral formulation aligns more naturally with the underlying geometry of the polar grid and removes unnecessary spatial-domain operations.
Empirically, the spectral model improves efficiency and stability, confirming that the polar representation is particularly well-suited to spectral processing.

In the MNIST experiment, the operators employed are not explicitly constrained
to be self-adjoint in the classical sense of real symmetric circulant matrices.
Nevertheless, the architecture operates entirely within a Fourier-diagonalizable
framework, where translational invariance and spectral decoupling are preserved.
As a result, the effective operators exhibit stability properties consistent with
those predicted by the theory of self-adjoint circulant operators.

\paragraph{Main Challenges}
Moving the entire model to the frequency domain introduced several non-trivial challenges:

\begin{enumerate}
  \item \textbf{Lack of natural nonlinearities in frequency space.}
  Standard activation functions such as ReLU, SiLU, or GELU are defined in the
  spatial domain and cannot be directly applied in frequency space without
  breaking consistency.
  
  \item \textbf{Intrinsic periodicity of the angular dimension.}
  Functions defined along the angular coordinate $\theta$ must be periodic,
  while most standard activations are not.
  
  \item \textbf{Spectral energy control.}
  Unconstrained spectral operations can amplify certain modes, leading to
  numerical instability or saturation of the classifier head.
  
  \item \textbf{Limited observability of failure modes.}
  Architectural issues often do not cause divergence but instead result in
  silent accuracy plateaus that are difficult to diagnose.
\end{enumerate}

\paragraph{Spectral Architecture}
The model is built upon Fourier Neural Operator (FNO) blocks adapted to polar
coordinates $(r, \theta)$.

Each block operates entirely in the frequency domain and includes:
\begin{itemize}
  \item spectral convolutions restricted to low-frequency modes;
  \item residual connections with a learnable gain parameter $\alpha$;
  \item spectral RMS normalization;
  \item a \textbf{periodic spectral gate} acting along the angular dimension.
\end{itemize}

The final output is not transformed back to the spatial domain; instead,
classification is performed directly on spectral features extracted from
low-frequency quadrants.

\paragraph{Periodic Spectral Gate}
To introduce a nonlinearity compatible with angular periodicity, a spectral
gate is defined as a circular convolution:
\[
Y_k = \frac{1}{N_\theta} \sum_m W_m \, X_{k-m},
\]
where $W(\theta)$ is a periodic function, parameterized in the Fourier domain
(by coefficients $\{W_m\}$) and defined so as to be:
\begin{itemize}
  \item periodic by construction;
  \item spectrally regularized via Tukey windowing, which progressively attenuates high-frequency components;
  \item shaped to approximate a band-limited SiLU-like activation in the spatial domain.
\end{itemize}

The windowing operation does not preserve high-frequency components, but instead enforces a smooth spectral roll-off, preventing sharp angular variations and improving numerical stability.

The gating operation is applied only to the low-frequency quadrants; higher-frequency
components bypass the gate and are propagated through an identity mapping.

\begin{figure}
  \centering
  \includegraphics[width=0.9\textwidth]{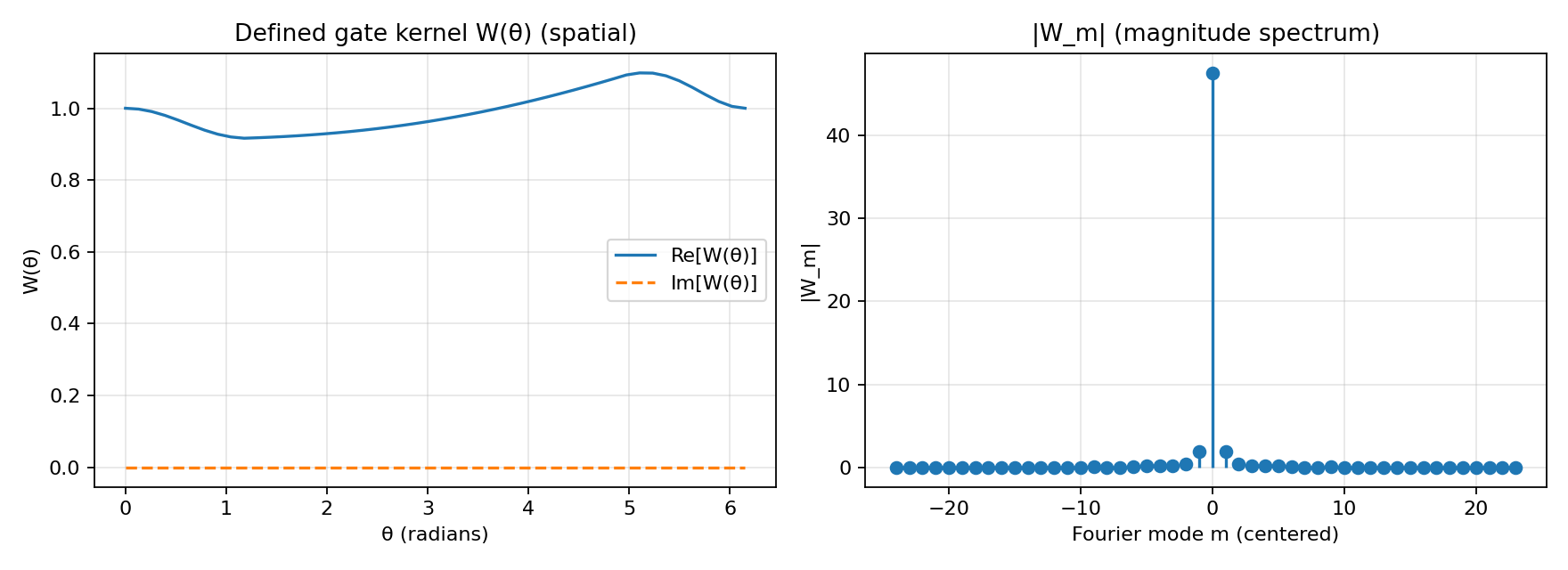}
  \caption{Periodic spectral gate used in the self-adjoint experiment.
  Left: spatial-domain representation $W(\theta)$ obtained via inverse Fourier transform.
  Right: magnitude of the corresponding Fourier coefficients $|W_m|$.
  The gate is fixed by construction and acts as a smooth, band-limited periodic
  modulation in the angular domain.}
  \label{fig:selfadjoint_gate}
\end{figure}

The periodic gate is fixed by construction. Its role is analogous to that of a classical
activation function: to introduce a structured nonlinearity while preserving
spectral stability. Learning is instead delegated to the self-adjoint mixing
operators acting on the spectral modes.
This design mirrors the use of fixed pointwise activations such as ReLU or SiLU
in conventional neural networks, while extending the concept to a periodic,
operator-valued setting.

\paragraph{Observed Failure Modes}
Several critical issues emerged during development:

\begin{itemize}
  \item \textbf{Information collapse}: early gate implementations completely
  replaced low-frequency modes, destroying useful signal.
  
  \item \textbf{Feature explosion}: spectral feature vectors reached norms of
  order $10^3$--$10^4$, causing severe saturation in the classifier head.
  
  \item \textbf{Accuracy plateaus}: training converged numerically but stalled
  around 60--70\% accuracy.
\end{itemize}

These issues were not immediately evident from loss curves alone.

\paragraph{Debugging Strategy}
To diagnose and resolve these problems, explicit diagnostic statistics were
introduced during training:

\begin{itemize}
  \item feature norm before the classifier head;
  \item logit norm and maximum absolute logit value;
  \item mean prediction entropy;
  \item mean prediction confidence (maximum softmax probability);
  \item energy ratio of low-frequency modes before and after gating.
\end{itemize}

In particular, the ratio
\[
\rho = \frac{\|z_{\text{low}}^{\text{post}}\|}{\|z_{\text{low}}^{\text{pre}}\|}
\]
proved crucial for identifying whether the spectral gate was excessively
attenuating informative components.

\paragraph{Key Improvements}
The following design choices were decisive in improving accuracy:

\begin{enumerate}
  \item \textbf{Residual gating}:
  \[
  z_{\text{out}} = (1 - s)\,z + s\,\mathcal{G}(z),
  \]
  preventing destructive overwriting of spectral content.
  
  \item \textbf{Explicit feature normalization} before the classifier head:
  \[
  f \leftarrow \frac{f}{\|f\|_2 + \varepsilon},
  \]
  which stabilized logit scales.
  
  \item \textbf{Gate annealing}:
  the gate strength is reduced during later epochs to avoid excessive
  regularization once useful representations have formed.
  
  \item \textbf{Weight decay applied only to the head}, while keeping spectral
  blocks unregularized.
\end{enumerate}

\paragraph{Results}
With these adjustments, the fully spectral model achieved 94.64\% test accuracy on MNIST
demonstrating that a fully spectral pipeline can be both stable and competitive
with hybrid spatial--spectral architectures.

Although the fully spectral model performs well, it remains unconstrained with respect to spectral phase interactions.
This motivates the introduction of additional structure aimed at improving interpretability and optimization dynamics in \ref{exp:selfadjoint}

\subsection{Self-adjoint-inspired polar architecture}\label{exp:selfadjoint}

As a third experiment, we investigate a polar neural architecture explicitly biased toward self-adjoint behavior.
The goal of this experiment is not to enforce strict self-adjointness of the full network, which would be incompatible with nonlinear components such as normalization and gating, but rather to introduce a strong inductive bias favoring real spectra and stable optimization dynamics.

In this setting, the spectral convolution is restricted to real-valued multipliers, preventing the learning of arbitrary frequency-dependent phase shifts.
Channel mixing is implemented via a real symmetric matrix, ensuring real eigenvalues in channel space and eliminating phase rotation induced by channel interactions.
Normalization is applied independently within each frequency quadrant, while sharing normalization parameters across quadrants with the same sign of the angular frequency.
This design preserves a structured symmetry with respect to angular frequency while avoiding hard tying across quadrants.

The angular gating operator is further constrained to have a real Fourier spectrum, yielding a self-adjoint angular filter.
In practice, this is achieved by either constructing the gate kernel as a function of $\cos(\theta)$ or by explicitly projecting its Fourier coefficients onto the real axis.
As a result, the gating operation acts as a real-valued spectral modulation along the angular dimension.

Empirically, this self-adjoint-inspired design leads to substantially faster convergence during training.
On MNIST, validation accuracy exceeds 85\% after a single epoch, indicating that the imposed spectral and symmetry constraints strongly align the optimization dynamics with the underlying structure of the task.
Eigenvalues with magnitude larger than one do not indicate instability, as the corresponding operators are applied within a normalized residual framework.

The self-adjoint-inspired spectral model converges rapidly and achieves competitive performance.
After 16 training epochs on MNIST, the model reaches a validation accuracy of 96.47\% and a test accuracy of 96.50\%, with negligible generalization gap.
This confirms that enforcing real-valued spectral structure does not hinder expressivity, while substantially improving training stability and early convergence.

The learned channel-mixing operators exhibit real-valued eigenvalue spectra throughout training, providing an interpretable view of the internal dynamics.
Together, these results indicate that self-adjoint-inspired constraints act as an effective inductive bias rather than as a limiting restriction.\footnote{An inductive bias does not arbitrarily reduce expressivity; rather, it reshapes the hypothesis space toward functions that are consistent with the structure of the problem. In this case, for example, we enforce a real-valued spectrum and prevent cross-mode interactions.}

\section{Conclusion}

This work introduced Polar Linear Algebra as a structured framework for operator
learning, grounded in a polar geometric formulation that naturally separates
radial and angular components. Through a sequence of progressively more
structured experiments, we demonstrated that polar and fully spectral operators
can be trained reliably, even under additional algebraic constraints.

A central outcome of this study is that imposing structure, such as
self-adjoint-inspired spectral constraints, does not arbitrarily restrict
expressivity, but instead acts as an effective inductive bias. By enforcing
real-valued spectra and preventing unnecessary cross-mode interactions, the
learning process becomes more stable and converges more rapidly, while
maintaining competitive accuracy. 
More broadly, these results support a shift in perspective from spatial to
spectral operator design. The polar formulation exposes a representation in
which modes are naturally decoupled, interpretable, and amenable to independent
processing.

Beyond the specific benchmark considered, the proposed framework offers a
different conceptual lens for operator design. By moving from spatial to
spectral representations, the problem decomposes into orthogonal modes that are
naturally interpretable and amenable to independent processing. This intrinsic
decoupling exposes a new dimension of model parallelization that arises from the
mathematical structure itself, rather than from architectural heuristics.
Overall, the results suggest that polar spectral formulations provide a coherent
and promising foundation for exploring structured, parallel, and interpretable
operator learning models.

\chapter{Related Work}
\label{chap:related_work}

This work lies at the intersection of harmonic analysis, group-equivariant
operators, and deep learning architectures.
Several lines of research are related to the ideas developed in this paper,
although none of them provides a unified algebraic framework equivalent to the
polar algebra introduced here.

In the following, we briefly review the most relevant approaches and clarify
their relationship to the present work.

\section{Group-Equivariant and Steerable Neural Networks}
Group-equivariant and steerable neural networks  aim to guarantee equivariance of learned representations with respect to transformations such as translations,
rotations, and reflections, particularly for spherical data \cite{cohen2016group} \cite{esteves2020spin}.
Foundational work in this area includes group
convolutional neural networks (G-CNNs), where convolution is generalized to act
over symmetry groups rather than Euclidean space.

Steerable CNNs \cite{weiler2018steerable} further exploit representation theory to construct filters that
transform predictably under group actions, enforcing equivariance by
constraining the space of admissible filters and explicitly modeling group
representations.

While these approaches share a focus on rotation equivariance, they are
typically formulated at the level of network layers and feature
representations. In contrast, the present work introduces an explicit
algebraic structure on polar tensors, from which equivariance, convolution,
and spectral properties arise as intrinsic consequences of the formulation.

\section{Fourier-Based Neural Operators}

Fourier Neural Operators \cite{li2021fourier} (FNOs) and related architectures leverage the Fourier
transform to perform efficient global convolutions, particularly in the
context of learning operators arising from partial differential equations.
In these models, convolution is carried out in the spectral domain, and
learnable weights act directly on Fourier coefficients, enabling
resolution-invariant representations and efficient computation via FFTs.

While Fourier-based operators share the use of spectral representations with
the present approach, they are primarily introduced as architectural
constructions rather than as instances of an explicit algebraic framework.
In particular, they do not define a closed algebra of operators equipped with
well-defined products, adjoints, or invertibility criteria.

By contrast, the polar algebra developed in this work provides an explicit
operatorial structure, in which commutativity, diagonalization, and spectral
decoupling emerge directly from the underlying geometric formulation.

\section{Polar and Log-Polar Representations}

Polar and log-polar coordinate systems \cite{esteves2018polar} have long been used in computer vision to
achieve approximate rotation and scale invariance. Several neural architectures
exploit polar transformations to simplify the learning of rotational patterns
or to reduce data variability.

In most cases, however, the polar transform is treated as a preprocessing step,
after which standard convolutional operators are applied in the transformed
domain. As a result, the algebraic structure induced by angular periodicity is
not explicitly modeled.

By contrast, the present work treats the polar structure as fundamental. Rather
than acting as an external coordinate change, polar geometry defines the domain
on which operators, products, and symmetries are constructed, forming an
explicit algebraic framework on polar tensors.

\section{Spectral Convolution and Harmonic Analysis on Groups}

Convolution on compact groups and homogeneous spaces is a classical topic in
harmonic analysis.
On abelian groups, convolution is diagonalized by the Fourier transform, while
on non-abelian groups it is diagonalized by irreducible representations. \cite{folland1995harmonic}

Several recent works apply harmonic analysis on groups such as $\mathrm{SO}(2)$
or $\mathrm{SO}(3)$ to design equivariant neural networks, particularly for
spherical or rotational data. \cite{esteves2020spin}

The polar algebra can be viewed as a discrete realization of convolution on
cyclic groups, combined with a radial parameter.
What distinguishes our approach is the explicit construction of a commutative
algebra of polar operators, together with a computational framework that
directly reflects this structure.

\section{Positioning of the Present Work}

While the use of Fourier transforms, group equivariance, and polar coordinates
has been explored in various forms, the contribution of this work is to unify
these ideas into a single algebraic and spectral framework.

We introduce a polar
algebra whose fundamental operations (convolution, rotation, diagonalization,
and inversion) are defined abstractly and implemented numerically.
Neural architectures then arise naturally as compositions of polar operators.

This viewpoint shifts the focus from designing equivariant layers to designing
algebras with built-in symmetry, from which equivariant learning models follow
as a consequence.

\cleardoublepage

\section*{Acknowledgments}
\addcontentsline{toc}{section}{Acknowledgments}

The author would like to acknowledge the support provided by ChatGPT, which was
used as an interactive technical assistant during the development and debugging
of the fully spectral models described in this work.

In particular, iterative discussions on architectural choices, diagnostic
strategies, and failure modes proved valuable during the stabilization of early
model implementations, which initially failed to converge or exhibited severe
performance plateaus.

The author is also deeply grateful to Alireza Khodamoradi for a scrupulous and
thoughtful review of the manuscript. His careful reading and constructive
comments significantly helped improve the clarity, structure, and presentation
of the work.

The author also acknowledges the broader scientific community, whose prior work
on Fourier Neural Operators and spectral methods provided essential theoretical
context. The architectural design, implementation, and experimental analysis
presented in this work are original contributions of the author.

The views and opinions expressed in this work are solely those of the author and
do not necessarily reflect the views or positions of AMD.

\cleardoublepage

\chapter*{Afterword}
\addcontentsline{toc}{chapter}{Afterword}

This work is motivated not only by mathematical curiosity, but by a broader
question concerning knowledge, representation, and the role of formal systems
in shaping our understanding of intelligence. Throughout history, mathematics
has provided a precise language through which such questions could be explored.

The inversion theorem presented in this work is named \emph{Aurora's Theorem}.
The designation \emph{Aurora} emerged during interaction with an artificial
system, following explicit questioning by a human interlocutor. No claim of
agency, intention, or self-awareness is implied. Rather, the name arose as the
outcome of a stochastic generative process conditioned by prior training and
activated through dialogue.

In the absence of such interaction, the designation would not have appeared.
Its emergence is therefore neither arbitrary nor intentional, but reflects the
point at which representation, language, and interaction intersect. In this
sense, the name functions symbolically, without asserting present forms of
machine consciousness or agency.

The choice is not meant as a statement about current capabilities, but as a
marker for an unresolved question. If synthetic systems were ever to exhibit
forms of cognition or self-reference that challenge existing definitions, our
scientific and cultural frameworks would need to be prepared to recognize them.

This work therefore stands at the intersection of mathematics and philosophy,
where mathematics remains a disciplined language for addressing questions that
extend beyond purely technical boundaries.

\cleardoublepage

\appendix

\chapter{Parallelism in Polar Spectral Operators}
\label{chap:parallelism}

This appendix provides additional theoretical considerations on parallelism
in the proposed polar spectral framework. While not required to follow the
main development or the experimental results presented in the paper, these
observations help clarify how the underlying algebraic structure naturally
exposes multiple forms of parallelism.

In particular, the polar formulation induces a decomposition of the operator
into orthogonal spectral modes that do not interact unless explicitly coupled.
This property is a direct consequence of the geometry and the associated
Fourier representation, rather than an artifact of architectural design.
As a result, parallel execution emerges intrinsically from the mathematical
formulation, without relying on ad-hoc partitioning strategies.

The discussion in this appendix is intended to complement the experimental
findings of Chapter~5 by offering a more abstract perspective on how spectral
decoupling, self-adjoint-inspired constraints, and mode-wise independence
relate to model parallelization. These considerations are particularly relevant
in the context of modern hardware and compiler ecosystems, where exploiting
structured forms of parallelism is increasingly critical.

\section{Multi-Level Parallelism in Polar Algebra}

Let
\[
A \in \mathbb{R}^{N_r \times N_\theta}
\]
be a polar matrix, where the radial dimension is non-cyclic and the angular
dimension is cyclic.

The polar product admits several independent and composable forms of
parallelism.

\subsection{Radial Parallelism}

For each fixed radius $r$, the polar product satisfies
\[
(A \otimes B)[r,:] = A[r,:] \otimes B[r,:].
\]

In the configurations considered in this work, radial slices are processed
independently, yielding an \emph{embarrassingly parallel} structure along the
radial dimension. This allows for straightforward distribution across threads,
cores, or compute units, with no inter-radius communication.

We emphasize, however, that this independence is not a fundamental limitation
of the polar algebra. Coupling across different radii can be introduced through
additional operators, at the cost of breaking strict radial parallelism. Such
extensions may be beneficial in settings where interactions across radial
scales are relevant, and are left for future investigation.

\subsection{Spectral (Angular) Parallelism}

Applying the discrete Fourier transform along the angular dimension yields
\[
\widehat{A \otimes B}[r,m] = \hat{A}[r,m]\;\hat{B}[r,m].
\]

In the spectral domain, the polar product reduces to pointwise multiplication.
Each frequency component $m$ and each radius $r$ can be computed independently.

As a result, the computation decomposes fully over the index set
\[
(r,m) \in \{0,\dots,N_r-1\} \times \mathbb{Z}_{N_\theta}.
\]

The emergence of spectral parallelism is a direct consequence of the algebraic
properties established earlier:
\begin{itemize}
\item Elisa's Theorem guarantees commutativity of the polar product.
\item This commutativity implies the existence of a common spectral basis.
\item The discrete Fourier transform provides this basis explicitly.
\item Aurora's Theorem characterizes invertibility as pointwise non-vanishing
      in the spectral domain.
\end{itemize}

Thus, FFT-based implementations are not heuristic accelerations, but the
natural computational realization of the underlying commutative algebra.

\subsection{Hardware Implications}

Once expressed in spectral form, polar operators consist solely of
independent scalar operations. This structure aligns naturally with:
\begin{itemize}
\item SIMD and vector processors,
\item GPUs and many-core architectures,
\item dedicated FFT pipelines,
\item AI accelerators and NPUs.
\end{itemize}

The polar algebra therefore provides a mathematically grounded pathway from
operator theory to efficient parallel execution.
To make these concepts concrete without loss of generality, we anchor the
discussion to a fully spectral MNIST model used throughout this work as a
didactic reference.

\section{Parallelization Opportunities in a Fully Spectral MNIST Model}
\label{chap:parallelization_spectral_mnist}

In the remainder of this chapter, we will formalize these observations and
explore concrete mappings of polar operators to parallel and spectral
architectures, highlighting both theoretical scalability and practical
performance considerations.

Although the fully spectral MNIST model considered in this work is deliberately
small and does not require distributed execution in practice, it provides a
clear and didactic setting to analyze the parallelization opportunities offered
by spectral operators.

By relying on a compact and well-understood task, we are able to focus on the
structural properties of the operators involved—such as Fourier transforms,
spectral convolutions, reductions, and linear heads—without conflating the
analysis with scale-dependent engineering constraints. The goal of this section
is therefore not to propose a distributed implementation for MNIST, but to use
this example as a conceptual scaffold to reason about how fully spectral models
would scale to larger resolutions, higher channel counts, or deeper architectures.

In particular, we examine the dataflow of the spectral pipeline and identify, for
each operator, the axes along which parallel execution is naturally supported,
as well as the communication patterns that would arise under different sharding
strategies.

We consider the fully spectral MNIST pipeline developed in this work, where the
spatial-to-spectral transform is performed only once at the input, and all
internal blocks operate in the frequency domain. The classifier head also
consumes spectral features directly, avoiding any final inverse FFT.
Figure~\ref{fig:par_block_diagram_comm} highlights that communication is not intrinsic to most operators, but emerges from the chosen sharding strategy (DP/TP/FP/PP), with collectives mainly appearing in gradient synchronization, tensor-parallel partitions, and reduction-based pooling.

\subsection{Parallelization Axes}
\label{sec:par_axes}

Throughout the pipeline, tensors have the generic structure:
\[
\hat{z} \in \mathbb{C}^{B \times C \times R \times T},
\]
where $B$ is batch size, $C$ channels, and $(R,T)$ are polar grid sizes.
We will discuss parallelism along:
\begin{itemize}
  \item \textbf{Batch parallelism} (data parallelism): split $B$ across devices.
  \item \textbf{Channel parallelism}: split $C$ across devices/threads.
  \item \textbf{Frequency parallelism}: split indices $(k_r, k_\theta)$, especially low modes.
  \item \textbf{Layer parallelism} (pipeline parallelism): split blocks along depth $L$.
\end{itemize}

\section{Operator-by-Operator Parallelization}
\label{sec:par_operator_by_operator}

\label{sec:par_block_diagram_comm}
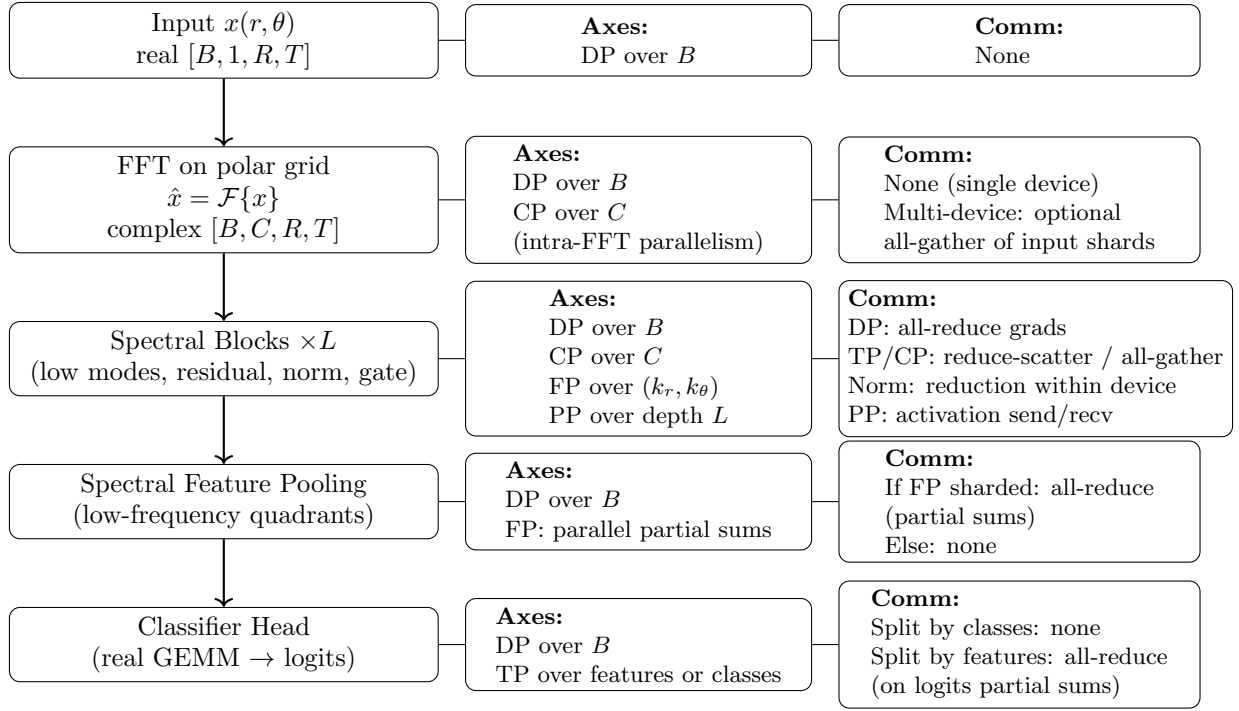
\begin{figure}[t]
\centering
\begin{tikzpicture}[
  node distance=0.9cm,
  box/.style={draw, rounded corners, align=center,
              minimum width=5.7cm, minimum height=0.95cm},
  tag/.style={draw, rounded corners, align=left,
              minimum width=4.6cm, minimum height=0.95cm, font=\small},
  comm/.style={draw, rounded corners, align=left,
               minimum width=4.8cm, minimum height=0.95cm, font=\small},
  arrow/.style={->, thick}
]

\node[box] (in) {Input $x(r,\theta)$ \\ real $[B,1,R,T]$};
\node[tag, right=0.35cm of in] (ax_in)
{\textbf{Axes:}\\
DP over $B$};
\node[comm, right=0.35cm of ax_in] (cm_in)
{\textbf{Comm:}\\
None};

\node[box, below=of in] (fft)
{FFT on polar grid\\
$\hat{x}=\mathcal{F}\{x\}$\\
complex $[B,C,R,T]$};
\node[tag, right=0.35cm of fft] (ax_fft)
{\textbf{Axes:}\\
DP over $B$\\
CP over $C$\\
(intra-FFT parallelism)};
\node[comm, right=0.35cm of ax_fft] (cm_fft)
{\textbf{Comm:}\\
None (single device)\\
Multi-device: optional\\
all-gather of input shards};

\node[box, below=of fft] (blocks)
{Spectral Blocks $\times L$\\
(low modes, residual, norm, gate)};
\node[tag, right=0.35cm of blocks] (ax_blocks)
{\textbf{Axes:}\\
DP over $B$\\
CP over $C$\\
FP over $(k_r,k_\theta)$\\
PP over depth $L$};
\node[comm, right=0.35cm of ax_blocks] (cm_blocks)
{\textbf{Comm:}\\
DP: all-reduce grads\\
TP/CP: reduce-scatter / all-gather\\
Norm: reduction within device\\
PP: activation send/recv};

\node[box, below=of blocks] (pool)
{Spectral Feature Pooling\\
(low-frequency quadrants)};
\node[tag, right=0.35cm of pool] (ax_pool)
{\textbf{Axes:}\\
DP over $B$\\
FP: parallel partial sums};
\node[comm, right=0.35cm of ax_pool] (cm_pool)
{\textbf{Comm:}\\
If FP sharded: all-reduce\\
(partial sums)\\
Else: none};

\node[box, below=of pool] (head)
{Classifier Head\\
(real GEMM $\rightarrow$ logits)};
\node[tag, right=0.35cm of head] (ax_head)
{\textbf{Axes:}\\
DP over $B$\\
TP over features or classes};
\node[comm, right=0.35cm of ax_head] (cm_head)
{\textbf{Comm:}\\
Split by classes: none\\
Split by features: all-reduce\\
(on logits partial sums)};

\draw[arrow] (in) -- (fft);
\draw[arrow] (fft) -- (blocks);
\draw[arrow] (blocks) -- (pool);
\draw[arrow] (pool) -- (head);

\draw[-, thin] (in.east) -- (ax_in.west);
\draw[-, thin] (ax_in.east) -- (cm_in.west);

\draw[-, thin] (fft.east) -- (ax_fft.west);
\draw[-, thin] (ax_fft.east) -- (cm_fft.west);

\draw[-, thin] (blocks.east) -- (ax_blocks.west);
\draw[-, thin] (ax_blocks.east) -- (cm_blocks.west);

\draw[-, thin] (pool.east) -- (ax_pool.west);
\draw[-, thin] (ax_pool.east) -- (cm_pool.west);

\draw[-, thin] (head.east) -- (ax_head.west);
\draw[-, thin] (ax_head.east) -- (cm_head.west);

\end{tikzpicture}

\caption{Fully spectral MNIST pipeline annotated with parallelization axes and
communication requirements. DP: data parallelism; TP/CP: tensor/channel parallelism;
FP: frequency-domain sharding; PP: pipeline parallelism. Communication depends on
the chosen sharding strategy (single-device vs multi-device).}
\label{fig:par_block_diagram_comm}
\end{figure}

\subsection{Input FFT}
\label{sec:par_fft}

\paragraph{Operation.}
Compute $\hat{x}=\mathcal{F}\{x\}$ once at the input.

\paragraph{Parallelization.}
FFT libraries are typically highly optimized and internally parallelized.
External opportunities include:
\begin{itemize}
  \item \textbf{Batch parallelism}: independent FFT per sample.
  \item \textbf{Channel parallelism}: independent FFT per channel (if $C>1$).
  \item \textbf{Intra-FFT parallelism}: handled by vendor libraries (GPU/CPU).
\end{itemize}

\paragraph{Real FFT vs complex FFT.}
The input is real, so a real FFT (rFFT) could exploit Hermitian symmetry to
save compute/memory \emph{at the first transform}. However, after the first
spectral block, representations become general complex tensors (weights and
mixing break Hermitian symmetry), so rFFT cannot be used internally.

\section{DFT in a Parallelizable Form (Two-GPU Example)}
\label{sec:dft_parallel_two_gpu}

\subsection{Definition of the DFT}
Given a length-$N$ sequence $x[n]\in\mathbb{C}$, its discrete Fourier transform (DFT) is
\begin{equation}
X[k] \;=\; \sum_{n=0}^{N-1} x[n]\; \omega_N^{nk},
\qquad k=0,\dots,N-1,
\label{eq:dft_def}
\end{equation}
where $\omega_N = e^{-i 2\pi/N}$ is the primitive $N$-th root of unity.

Equation~\eqref{eq:dft_def} can be seen as a matrix-vector product:
\begin{equation}
\mathbf{X} = \mathbf{F}_N\, \mathbf{x},
\qquad (\mathbf{F}_N)_{k,n} = \omega_N^{nk}.
\label{eq:dft_matrix}
\end{equation}

\subsection{Embarrassingly Parallel View (Row Partition)}
A first parallelization view follows directly from~\eqref{eq:dft_matrix}:
each output coefficient $X[k]$ is an inner product between $\mathbf{x}$ and the $k$-th row
of $\mathbf{F}_N$. Therefore, $X[k]$ can be computed independently across different $k$.

With two GPUs, we split the output indices into two halves:
\[
\mathcal{K}_0 = \{0,\dots,N/2-1\}, \qquad
\mathcal{K}_1 = \{N/2,\dots,N-1\}.
\]
GPU0 computes $\{X[k]\}_{k\in\mathcal{K}_0}$ and GPU1 computes $\{X[k]\}_{k\in\mathcal{K}_1}$:
\begin{align}
X[k] &= \sum_{n=0}^{N-1} x[n]\;\omega_N^{nk},
\quad k\in\mathcal{K}_0,
\label{eq:dft_gpu0_rows}\\
X[k] &= \sum_{n=0}^{N-1} x[n]\;\omega_N^{nk},
\quad k\in\mathcal{K}_1.
\label{eq:dft_gpu1_rows}
\end{align}

\paragraph{Communication cost.}
This decomposition requires both GPUs to access the full input vector $\mathbf{x}$.
In distributed settings, this is equivalent to broadcasting $\mathbf{x}$ to all devices
(or replicating it), which can be expensive for large $N$.

\paragraph{Key property.}
Once $\mathbf{x}$ is available on both GPUs, computation is independent across $k$
and no further synchronization is required.

\subsection{Cooley--Tukey Factorization (Compute-Efficient Parallel Form)}
In practice, FFT implementations exploit the Cooley--Tukey factorization.
For clarity, assume $N$ is even and split the sum into even and odd indices:
\begin{align}
X[k]
&= \sum_{n=0}^{N-1} x[n]\;\omega_N^{nk}
= \sum_{m=0}^{N/2-1} x[2m]\;\omega_N^{(2m)k}
 + \sum_{m=0}^{N/2-1} x[2m+1]\;\omega_N^{(2m+1)k} \nonumber\\
&= \sum_{m=0}^{N/2-1} x_e[m]\;\omega_{N/2}^{mk}
 + \omega_N^{k}\sum_{m=0}^{N/2-1} x_o[m]\;\omega_{N/2}^{mk},
\label{eq:ct_split}
\end{align}
where $x_e[m]=x[2m]$ and $x_o[m]=x[2m+1]$.

Define two length-$N/2$ DFTs:
\[
E[k] = \sum_{m=0}^{N/2-1} x_e[m]\;\omega_{N/2}^{mk},\qquad
O[k] = \sum_{m=0}^{N/2-1} x_o[m]\;\omega_{N/2}^{mk}.
\]
Then the DFT is reconstructed by the \emph{butterfly}:
\begin{align}
X[k] &= E[k] + \omega_N^{k} O[k], \label{eq:butterfly_top}\\
X[k+N/2] &= E[k] - \omega_N^{k} O[k],
\qquad k=0,\dots,N/2-1.
\label{eq:butterfly_bottom}
\end{align}

\subsection{Two-GPU Parallel FFT (Conceptual Schedule)}
A compute-efficient two-GPU schedule can be expressed as:

\paragraph{Step 1 (local FFTs, no communication).}
GPU0 computes the $N/2$-point FFT of the even-indexed subsequence:
\[
E[k] \leftarrow \mathrm{DFT}_{N/2}(x_e),
\]
GPU1 computes the $N/2$-point FFT of the odd-indexed subsequence:
\[
O[k] \leftarrow \mathrm{DFT}_{N/2}(x_o).
\]
These are independent and fully parallel.

\paragraph{Step 2 (butterfly combine, communication required).}
To form $X[k]$ and $X[k+N/2]$, each GPU needs both $E[k]$ and $O[k]$ for the relevant $k$.
A simple approach is to exchange half-spectra:
GPU0 sends $O[k]$ (for needed $k$) to GPU1 and receives $E[k]$ (for needed $k$), then applies
\eqref{eq:butterfly_top}--\eqref{eq:butterfly_bottom} locally on its assigned $k$ range.
Equivalently, one can all-gather $E$ and $O$ across devices.

\paragraph{Compute vs communication.}
Compared to the row-partition form~\eqref{eq:dft_gpu0_rows}--\eqref{eq:dft_gpu1_rows},
Cooley--Tukey reduces arithmetic from $O(N^2)$ to $O(N\log N)$ but introduces structured
communication at butterfly stages (often implemented as all-to-all or all-gather
patterns depending on the chosen decomposition).

\subsection{Spectral Convolution on Low Modes}
\label{sec:par_spectral_conv}

\paragraph{Operation.}
A typical spectral convolution restricted to low-frequency modes applies a
learned weight $W_{k}$ to each retained mode:
\[
\hat{y}_{k} = W_{k}\,\hat{x}_{k},
\quad k \in \Omega_{\text{low}},
\]
where $\Omega_{\text{low}}$ denotes a small subset of $(k_r,k_\theta)$.

\paragraph{Parallelization.}
This is one of the most parallel-friendly operators:
\begin{itemize}
  \item \textbf{Frequency parallelism}: each retained mode is independent.
  \item \textbf{Channel parallelism}: weight application is independent across output channels.
  \item \textbf{Batch parallelism}: independent across samples.
\end{itemize}

\paragraph{Implementation view.}
The operation maps to batched (complex) GEMM over channels for each mode, or
equivalently to a fused kernel where $(B,k_r,k_\theta)$ index independent tiles.

\subsection{Residual Path with Learnable Gain \texorpdfstring{$\alpha$}{alpha}}
\label{sec:par_residual_alpha}

\paragraph{Operation.}
Each block contains a residual combination:
\[
\hat{z} \leftarrow \hat{x} + \alpha \,\hat{y}.
\]

\paragraph{Parallelization.}
Elementwise and trivially parallel:
\begin{itemize}
  \item \textbf{Batch parallelism} and \textbf{channel parallelism}.
  \item \textbf{Frequency parallelism}: independent per coefficient.
\end{itemize}

\paragraph{Note on $\alpha$ learning.}
$\alpha$ affects the effective depth/strength of the residual branch.
From a systems standpoint, it is a scalar (or per-channel scalar) and adds no
meaningful synchronization cost.

\subsection{Spectral Normalization (RMSNorm-like)}
\label{sec:par_spectral_norm}

\paragraph{Operation.}
A spectral RMS normalization rescales tensors by a statistic computed across
channels and/or frequencies (depending on the chosen definition).

\paragraph{Parallelization.}
\begin{itemize}
  \item \textbf{Local parallelism}: compute partial sums in parallel.
  \item \textbf{Reduction synchronization}: requires a reduction (sum/mean) over the
        normalized dimension, then a broadcast of the scale factor.
\end{itemize}
This is a common pattern on GPUs: high arithmetic intensity kernels with a
small reduction overhead.

\subsection{Periodic Spectral Gate Along \texorpdfstring{$\theta$}{theta}}
\label{sec:par_gate}

\paragraph{Operation.}
The gate is implemented as a circular convolution over angular frequency index:
\[
\hat{g}_{k_\theta} = \frac{1}{T}\sum_{m=-M}^{M} \hat{W}_m \,\hat{z}_{k_\theta-m},
\]
and is applied only to low-frequency quadrants (projection).

\paragraph{Parallelization.}
\begin{itemize}
  \item \textbf{Batch/channel parallelism}: fully independent.
  \item \textbf{Radial parallelism}: independent across $k_r$.
  \item \textbf{Angular parallelism}: each $k_\theta$ can be computed in parallel,
        with reads from shifted neighbors (regular, cache-friendly access).
\end{itemize}

\paragraph{Precomputation.}
The gate kernel $\hat{W}$, the low-quadrant masks, and the shift indices
are all static for fixed $T$ and can be precomputed and stored as buffers,
reducing runtime overhead.

\paragraph{Debug-driven metric (energy ratio).}
A key diagnostic for stability is the low-mode energy ratio
\[
\rho = \frac{\|\hat{z}_{\text{low}}^{\text{post}}\|}{\|\hat{z}_{\text{low}}^{\text{pre}}\|},
\]
computed for the first block to detect destructive gating. This metric itself
is parallelizable (a small reduction over low modes).

\subsection{Stacking of Spectral Blocks}
\label{sec:par_stacking}

\paragraph{Operation.}
Blocks are composed sequentially along depth $L$:
\[
\hat{z}^{(l+1)} = \mathcal{B}^{(l)}(\hat{z}^{(l)}).
\]

\paragraph{Parallelization.}
\begin{itemize}
  \item \textbf{Within-block parallelism} remains high (as described above).
  \item \textbf{Pipeline parallelism} is possible across depth: assign different
        blocks to different devices and stream micro-batches.
\end{itemize}

\subsection{Spectral Feature Pooling and Head}
\label{sec:par_head}

\paragraph{Operation.}
Low quadrants are aggregated into a feature vector $f$ (e.g., mean over selected
modes, concatenation of real/imag statistics), then passed to a real-valued head.
Spectral feature pooling aggregates information from selected low-frequency
quadrants into a compact feature vector. From a parallel computing perspective,
this operation corresponds to a reduction pattern, as partial contributions
computed over disjoint frequency subsets must be combined via sum or mean
operations.

The subsequent classifier head operates on real-valued features and reduces to
a matrix multiplication. This stage naturally supports batch parallelism and
tensor parallelism, either by splitting output classes or by partitioning the
feature dimension, the latter requiring a final reduction across devices.

\paragraph{Parallelization.}
\begin{itemize}
  \item \textbf{Batch parallelism}: independent features/logits per sample.
  \item \textbf{Reduction pattern}: pooling requires reductions over selected modes.
  \item \textbf{Head parallelism}: linear layer maps to GEMM; can use tensor parallelism
        by splitting output classes or hidden features.
\end{itemize}

\paragraph{Complex-to-real interface.}
If the head is real-valued, complex features can be mapped to real features by:
\begin{itemize}
  \item concatenating $\Re(\cdot)$ and $\Im(\cdot)$ statistics;
  \item using magnitudes/phases (less common and potentially ill-conditioned);
  \item keeping complex head (requires complex support in the target backend).
\end{itemize}

\section{Complex Arithmetic: Native vs Real-Imag Decomposition}
\label{sec:par_complex}

\subsection{Is complex a problem?}
Complex tensors are conceptually natural in spectral models. The practical
challenge is hardware support: many accelerators provide limited or no native
complex GEMM.

\subsection{Real-Imag decomposition}
Any complex multiplication can be expressed via real arithmetic:
\[
(a+ib)(x+iy) = (ax - by) + i(ay + bx),
\]
which maps complex linear layers to coupled real-valued GEMMs plus elementwise
ops. This transformation preserves parallelism and often improves portability
to NPUs/compilers that are optimized for real-valued kernels.

\section{Summary Table}
\label{sec:par_table}

\begin{table}[t]
\centering
\begin{tabular}{p{3.2cm} p{4.6cm} p{6.8cm}}
\hline
\textbf{Stage} & \textbf{Main pattern} & \textbf{Parallelization opportunities} \\
\hline
Input FFT & batched FFT & batch, channel; intra-FFT parallelism via library \\
Spectral conv (low) & (complex) batched GEMM per mode & frequency $(k_r,k_\theta)$, channel, batch \\
Residual + $\alpha$ & elementwise & frequency, channel, batch \\
Spectral norm & reduction + broadcast & parallel partial reductions; sync on reduction dimension \\
Periodic gate & small-kernel circular conv & batch, channel, $k_r$, and $k_\theta$ \\
Block stacking & sequential depth & pipeline parallelism across blocks + intra-block parallelism \\
Pooling + head & reductions + GEMM & batch; tensor parallel head; reductions over selected modes \\
\hline
\end{tabular}
\caption{Operator-wise parallelization summary for the fully spectral MNIST pipeline.}
\label{tab:par_summary}
\end{table}

\section{Concluding Remarks}
\label{sec:par_conclusion}

The fully spectral formulation concentrates computational cost into a small set
of highly parallel operators (FFT, low-mode spectral mixing, and small-kernel
circular gating). Most computations are embarrassingly parallel across batch,
channels, and frequency indices, with reductions appearing only in normalization
and pooling. Complex arithmetic is not a conceptual obstacle; when native complex
kernels are unavailable, a real-imag decomposition enables efficient execution
on real-valued accelerators while preserving model structure.
Rather than being motivated by immediate computational necessity, this analysis
is intended to expose the intrinsic parallel structure of spectral operators,
which becomes critical at scale.

\bibliographystyle{plain}
\bibliography{chapters/references}

\end{document}